\documentclass{article}

\usepackage[final,main]{neurips_style/neurips_2026}

\usepackage[utf8]{inputenc}
\usepackage[T1]{fontenc}
\usepackage[hypertexnames=false]{hyperref}
\usepackage{url}
\usepackage{booktabs}
\usepackage{amsfonts}
\usepackage{amsmath}
\usepackage{amssymb}
\usepackage{nicefrac}
\usepackage{microtype}
\usepackage{xcolor}
\usepackage{graphicx}
\usepackage{subcaption}
\IfFileExists{svg.sty}{\usepackage{svg}}{}
\graphicspath{{figures/}}
\usepackage{float}
\usepackage{algorithm}

\usepackage{algpseudocode}
\usepackage{tikz}
\usetikzlibrary{positioning,calc,decorations.pathreplacing}
\usepackage{amsthm}

\newtheorem{proposition}{Proposition}

\newtheorem{remark}[proposition]{Remark}

\usetikzlibrary{positioning, arrows.meta, calc, fit, shapes.geometric}

\title{Exact Unlearning via Quantized Sufficient Statistics}

\author{
  Ami Tavory \\
  Meta Platforms, Inc.
  \And
  Shripad Gade \\
  Meta Platforms, Inc.
  \And
  Tal Sarig \\
  Meta Platforms, Inc.
  \AND
  Noam Touitou \\
  Meta Platforms, Inc.
  \And
  Ido Guy \\
  Meta Platforms, Inc.
}

\begin{document}

\maketitle

\begin{abstract}
Exact unlearning requires a deployed predictor to match one rebuilt without the
information named by a deletion request. Existing general-purpose exact methods
localize retraining through disjoint shards, but every request still invalidates
a model, and smaller shards reduce the data available to each constituent
predictor. We introduce \emph{Quantized Sufficient Statistics} (QSS), which
separates a small frozen \emph{schema} from mutable, sum-decomposable
\emph{content}. The schema learns global structure; the content stores local
prediction corrections as additive statistics indexed by quantized regions.
Deleting content is therefore exact subtraction rather than optimization. We
distinguish two guarantees: QSS-L exactly removes a label while retaining the
unlabelled input, whereas QSS-E exactly removes both input and label by learning
the schema without deletable examples. A deletion takes the arithmetic fast path
with probability $1-\rho$ and triggers a full rebuild with probability $\rho$;
all reported expected latencies include both events. Across 15 vision, text, and
tabular datasets at $\rho=0.5\%$, QSS-L is within 2 percentage points of SISA on
11 tasks and provides 4--483$\times$ lower expected deletion latency on the
low-class-count tasks where a compact schema is effective. QSS-E quantifies the
additional accuracy cost of removing every trace of an input.
\end{abstract}

\section{Introduction}
\label{sec:intro}
\footnotetext{Code: \url{https://github.com/atavory/rqu}}

Machine unlearning has become a legal and operational requirement for deployed
models: privacy regulations such as the EU's
GDPR~\cite{mantelero2013eu} and California's
CCPA~\cite{ccpa_unlearning} mandate the ability to delete the
effect of specific training instances, while security and safety
concerns (removing poisoned~\cite{schoepf2024potion}, biased or
outdated data points~\cite{liu2025mitigating}) provide independent
motivation. Exactness is defined relative to the information named by the
request. We study two neighboring-dataset relations. \emph{Label deletion}
retains an input~$x$ but removes its label~$y$ and all supervised influence;
this models retracted annotations, outcomes, or responses when retaining the
unlabelled input is permitted. \emph{Instance deletion} removes both~$x$ and~$y$,
so the result must match training as if the example had never been present.
We write $\mathcal{D}^{-y_j}$ for the former neighboring dataset and
$\mathcal{D}^{-j}$ for the latter.
The fundamental obstacle in either case is \emph{entanglement}: standard
training weaves examples into shared parameters, leaving retraining as the
generic way to produce the appropriate neighboring-dataset model.

\begin{figure}[t!]
\centering
\includegraphics[width=\linewidth]{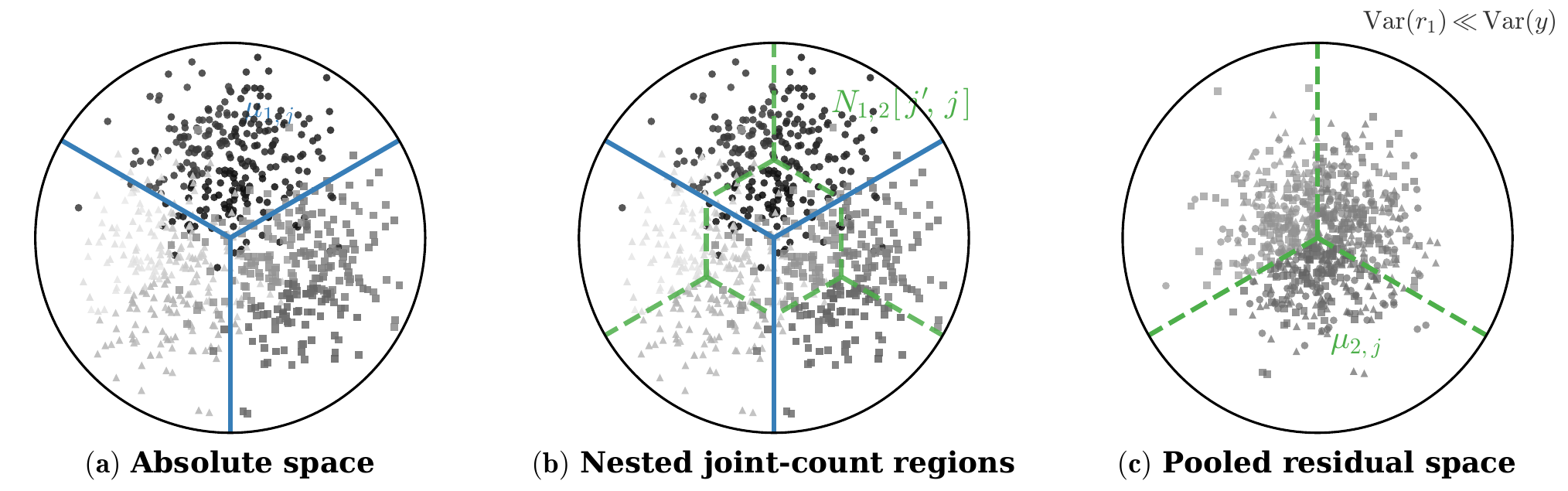}
\caption{\textbf{QSS translates Residual Quantization on embeddings to
residual boosting on labels.}
Marker shape ($\circ$, $\triangle$, $\square$) encodes cell assignment;
intensity encodes the target~$y_i$ (dark${=}$high, light${=}$low).
\textbf{(a)}~Level~1 Voronoi cells ($b{=}3$) partition the embedding
space; each cell has a distinct intensity level with visible internal spread.
\textbf{(b)}~A nested Level~2 codebook subdivides each cell; the
intersection regions define the co-occurrence counts $N_{1,2}[j',j]$
stored as sum-decomposable statistics, enabling exact unlearning via
simple count decrements.
\textbf{(c)}~After subtracting Level~1 cell means, all points collapse
into the pooled residual space. The wide intensity spread collapses to a narrow, uniform band,
visualizing that label residuals carry far less
variance than the original labels, since each RQ level corrects the previous,
analogous to gradient boosting~\cite{friedman2001greedy} with
quantization centroids as weak learners.}
\label{fig:architecture}
\label{fig:geometry}
\end{figure}

The standard exact baseline, SISA~\cite{bourtoule2021machine},
attacks this problem by \emph{disjoint partitioning}: it partitions the
dataset (size $n$) into $S$~shards, and trains a sub-model on each, simultaneously bounding the
retraining cost to~$O(n/S)$, but also ensuring that every deletion \emph{deterministically} invalidates a single shard
and forces an eventual retrain.  Under a batch of $B$~deletions,
the expected number of affected shards is
$S\bigl(1{-}(1{-}1/S)^B\bigr)$, which approaches~$S$ once
$B \gtrsim S\log S$.  SISA cannot lower its invalidation rate
without proportionally reducing the data available to each
shard; redundancy multiplies compute overhead without improving
availability.

We propose \textbf{Quantized Sufficient Statistics (QSS)}, which
breaks this coupling by changing the model representation. The intuition is a
wall of labelled bins. A frozen quantizer assigns similar inputs to bins, and
each bin stores only sums and counts of prediction errors. Prediction consults
the relevant bins; deleting an example subtracts its contribution. Recomputing
the same bin totals from the remaining information gives exactly the same
state, without rerunning an optimizer or scanning the dataset.

QSS realizes this idea by separating \emph{schema} from \emph{content}.
A small random subset $\mathcal{D}^{\circ}$ (size $n^{\circ}$, $\rho = n^{\circ}/n$,
typically ${\leq}\,1\%$) trains a frozen base model~$g$
and a Residual
Quantization (RQ)~\cite{chen2010approximate} codebook, the
\emph{schema} (see Table~\ref{tab:notation} for symbols).  The remaining data
$\mathcal{D}^{\bullet} = \mathcal{D} \setminus \mathcal{D}^{\circ}$
is pure \emph{content}: stored as sum-decomposable accumulators
indexed by RQ codes over frozen embeddings
(Figure~\ref{fig:architecture}).
Because the data structure \emph{is} the model, ordinary unlearning is
subtraction: deleting content is a constant-time arithmetic decrement,
independent of~$n$. QSS-L trains the label-blind codebook on all inputs and
supports exact label deletion. QSS-E trains the codebook only on the retained
schema and supports exact instance deletion for content examples. In either
mode, deleting a retained schema example invokes a full rebuild. Thus QSS has a
Las Vegas-style runtime guarantee: the result is always exact for the stated
neighbor relation, while the randomized schema determines whether the request
takes the arithmetic or rebuild path.
SISA cannot exploit partition redundancy, as overlapping shards
would simply require multiple partition rebuilds per deletion.
The probabilistic schema--content separation does not suffer the
same way.
We therefore maintain $p$~independent instances, each with its own random
$\mathcal{D}^{\circ}_i$, base model ($g_i$ trained on $\mathcal{D}^{\circ}_i$), and codebook.  A deletion
triggers the~$O(1)$ fast path for all active instances with
probability~$(1{-}\rho)^p$; a synchronous rebuild must occur only when
\emph{all}~$p$ instances are invalidated, an event with probability
$\rho^p$.  Every instance aggregates all
of~$\mathcal{D}$ in its accumulators, so while a pair rebuilds,
the remaining pairs continue serving at full accuracy.  In SISA,
taking a shard offline for retraining removes $1/S$ of the data
from the ensemble; in QSS, every pair independently sees all data,
so no accuracy is lost during rebuilds.

\smallskip
\noindent\textbf{Contributions.}
\begin{enumerate}\setlength\itemsep{2pt}
\item \textbf{Schema--content separation for exact unlearning}
  (\S\ref{sec:data_structure}):
  The schema--content separation proposed in this work reduces ordinary
  label or instance deletion to constant-time arithmetic updates, independent
  of~$n$ and identical to rebuilding for the corresponding neighbor relation.
  Co-occurrence counts avoid exponential fragmentation across
  quantization levels (Equation~\ref{eq:materialize}), leading
  to efficient polynomial-time re-materialization.

\item \textbf{Redundant schemas for amortized $O(1)$ deletion}
  (\S\ref{sec:amortized}):
  $p$~independent copies reduce the synchronous-rebuild probability
  to~$\rho^p$, with zero accuracy degradation.
  Under batch deletions of size~$B$, QSS forces a rebuild with
  probability $(1{-}(1{-}\rho)^B)^p$, negligible for
  typical~$\rho$ and~$p$.
\end{enumerate}

\smallskip
\noindent\textbf{Related work.}
\emph{Exact retraining methods.}
SISA~\cite{bourtoule2021machine} partitions data into shards and
retrains the affected shard per deletion at $O(n/S)$ cost.
Exact $k$-means deletion~\cite{ginart2019making} and exact random
forest unlearning via sufficient statistics~\cite{brophy2021machine}
achieve $n$-independent deletion for specific model families, but
rely on fixed, single-level partitions.
\cite{muresanu2025fast} freeze a pretrained LLM and isolate
mutable state in $k$-means clusters for $O(1)$ exact unlearning,
but require a full LLM forward pass per prediction; their clusters
suffer the same codebook instability we formalize in
Appendix~\ref{app:fragility}.
QSS extends sufficient-statistics unlearning to a learned,
multi-level quantized partition: the co-occurrence correction
in \textsc{RQA.Materialize} (Equation~\ref{eq:materialize}) is
what preserves decomposability across residual levels; without
it, level interactions would require joint $b^\ell$-cell
bookkeeping.

\smallskip
\noindent\emph{Analytic sufficient-statistic updates.}
Linear and ridge heads admit exact deletion by maintaining $X^\top X$ and
$X^\top Y$ and applying a rank-one downdate. ACU~\cite{tang2025acu} and the
centralized form of Quan et al.~\cite{quan2026exact} develop this analytic
approach for frozen representations. These methods are natural complements to
QSS: analytic updates are preferable when a modest-dimensional linear head is
accurate, whereas QSS stores nonlinear local residual structure in quantized
cells. The schema mechanism and the choice of analytic versus quantized residual
statistics are composable; Appendix~\ref{app:analytic} gives matched comparisons
and their different state/update costs.

\smallskip
\noindent\emph{Approximate methods.}
Certified removal~\cite{guo2020certified}, influence
functions~\cite{koh2017understanding}, Fisher
scrubbing~\cite{golatkar2020eternal}, and gradient
ascent~\cite{chen2025provable} provide varying degrees of
approximation; see surveys~\cite{nguyen2022survey,cheng2024mubench}.

\smallskip
\noindent\emph{Quantization and unlearning.}
QUAIL~\cite{mishra2026quail}, LUNAR~\cite{lunar2025}, and
OEU~\cite{nguyen2026oeu} treat quantization as a complication for
unlearning, treating quantization artifacts as problems to be mitigated. QSS inverts this
relationship: quantization \emph{is} the architecture, and the
quantized representation is what makes exact deletion tractable.

\input{fig_pipeline}
\section{Method}
\label{sec:method}
\label{sec:data_structure}  

\begin{table}[t]
\centering
\caption{Notation.  Superscript ${\circ}$ marks the retained subset;
  ${\bullet}$ marks the deletable portion.}
\label{tab:notation}
\smallskip
\begin{small}
\begin{tabular}{cl}
\toprule
Symbol & Meaning \\
\midrule
$X$, $Y$; $x_i$, $y_i$ & Inputs, labels; single instance, single label \\
$n$, $d$, $k$ & Dataset size, embedding dimensionality, label dimensionality \\
$f$ & Frozen upstream encoder (CLIP, SBERT, or identity); distinct from RQ.encode \\
$g$ & Base model trained on $\mathcal{D}^{\circ}$ (frozen) \\
$\alpha$ & Blending weight (cross-validated); $\alpha{=}0$ recovers $g$ \\
$\varphi(x)$ & Feature map: $L_2(f(x))$ \\
$b$, $\ell$ & Centroids per RQ level; number of levels \\
$\mathbf{C}$\,;\; $c_m$ & Code matrix $\in [b]^{n \times \ell}$; code at level $m$ \\
$y^r_i$ & RQ target for point $i$; $y^r_i = y_i - g(x_i)$ \\
$\mathbf{T}_{m,j}$ & RQ target sum $\sum_{i:\,c_{i,m}=j} y^r_i$ per cell \\
$N_{k,m}$ & Co-occurrence count matrix $\in \mathbb{N}^{b \times b}$ \;($k < m$) \\
$\mathbf{S}_{m,j}$\,;\; $n_{m,j}$ & Materialized accumulator / count (level $m$, cell $j$) \\
\bottomrule
\end{tabular}
\end{small}
\end{table}

\subsection{Instances: Schema \& Content}
\label{sec:model-pair}
\label{sec:codebook_desc}

QSS maintains $p \geq 1$ independent \emph{instances}
(Figure~\ref{fig:pipeline}, top).
Each instance consists of a frozen \emph{schema} (base model~$g_i$
and RQ codebook, trained on a retained subset
$\mathcal{D}^{\circ}_i$ of size $n^{\circ} < n$) and mutable
\emph{content} (accumulators aggregating all of $\mathcal{D}$).
The content stores only sufficient statistics ($\mathbf{T}$,
$n$, $N$), not the data itself; at our operating point, all
accumulators occupy ${\leq}\,51$\,MB even for ImageNet.
The deletable portion is
$\mathcal{D}^{\bullet}_i = \mathcal{D} \setminus \mathcal{D}^{\circ}_i$.
Because every instance's accumulators aggregate \emph{all}
of~$\mathcal{D}$ (using $\mathcal{D}^{\circ}_i$ only to train
the schema), decommissioning an instance incurs zero accuracy
loss.

Within each instance, the base model~$g$ and the RQ codebook are
trained on $\mathcal{D}^{\circ}$; all of $\mathcal{D}$ is then
encoded through the frozen codebook and accumulated into
mutable sufficient statistics (the residual sums~$\mathbf{T}$,
counters~$n$, and co-occurrence counts~$N$) visible at the
right of the building panel.
Prediction (Figure~\ref{fig:pipeline}, bottom-left) sums
the cell means along the query's code path and adds the base
model's output:
\begin{align}
  \hat{y}(x) \;=\; g(x) \;+\; \alpha \cdot
    \textsc{RQA.Predict}(x)\,,
  \label{eq:predict}
\end{align}
where $\alpha$ is cross-validated on a held-out fold along
with other hyperparameters.
Unlearning (Figure~\ref{fig:pipeline}, bottom-right) decrements
the affected accumulators and re-materializes; the schema
(codebook, $g$) is untouched.
If the deleted point lies in $\mathcal{D}^{\circ}$, the pair
is invalidated and a replacement is needed.

\paragraph{Guarantee-specific constructions.}
The neighboring relations were defined in Section~\ref{sec:intro}; the
construction must match the requested relation. Under label deletion, the
codebook and assignments may still depend on~$x_j$, but the released predictor
must equal one whose supervised state was built on~$\mathcal{D}^{-y_j}$. Under
instance deletion, every model component must equal one built on
$\mathcal{D}^{-j}$.
Three modes trade off these guarantees against accuracy
(Table~\ref{tab:methods}):
\begin{itemize}\setlength\itemsep{1pt}
\item \textbf{QSS-E} (instance): The codebook is trained on
  $\mathcal{D}^{\circ}$ only. Exact label and instance
  deletion for points in $\mathcal{D}^{\bullet}$.
\item \textbf{QSS-L} (label): The codebook is trained on all
  $\mathcal{D}$.  This leads to better codebook resolution and
  improved model performance, but only allows for exact label
  deletion.
\item \textbf{QSS-DP}: QSS-L plus Gaussian noise on published cell
  means.  Provides $(\varepsilon,\delta)$-label differential privacy
  (Proposition~\ref{prop:label_dp}).
\end{itemize}

\subsection{RQA Data Structure}
\label{sec:data_structure_desc}

Residual Quantization (RQ)~\cite{chen2010approximate} encodes
each embedding as a sum of $\ell$ centroid vectors, one per level,
each chosen from $b$ candidates; level~$m$ quantizes the residual
from levels $1,\ldots,m{-}1$.
We treat the RQ codebook and its encoder as a frozen black box
(trained via Faiss~\cite{johnson2019billion}); the \emph{RQA}
(RQ-Accumulator) data structure defined below stores sufficient
statistics on top of these codes.
The codebook may be trained on $\mathcal{D}^{\circ}$ only (QSS-E,
exact instance and label deletion) or all of $\mathcal{D}$ (QSS-L,
better resolution, exact label deletion only;
Table~\ref{tab:methods}).

The data structure stores three sum-decomposable quantities per
code tuple (each a sum over points, so adding or removing a point
costs $O(\ell^2)$ decrements):
\begin{align}
  \mathbf{T}_{m,j} &= \textstyle\sum_{i:\,c_{i,m}=j} y^r_i
    & &\text{(target sums)}\,,
    \label{eq:target_sums} \\
  n_{m,j} &= |\{i : c_{i,m}{=}j\}|
    & &\text{(cell counts)}\,,
    \label{eq:cell_counts} \\
  N_{k,m}[j',j] &= |\{i : c_{i,k}{=}j',\;c_{i,m}{=}j\}|
    & &\text{(co-occurrence,\; $k < m$)}\,.
    \label{eq:cooccurrence}
\end{align}
Prediction recovers per-level cell means
$\boldsymbol{\mu}_{m,j} = \mathbf{S}_{m,j}/n_{m,j}$
via \textsc{RQA.Materialize} (Equation~\ref{eq:materialize}),
which subtracts earlier-level contributions using
co-occurrence counts, analogous to gradient
boosting~\cite{friedman2001greedy} but avoiding exponential
fragmentation across levels.
For query~$x$:
\begin{align}
  \textsc{RQA.Predict}(x) &= \textstyle\sum_{m=1}^{\ell}
    \boldsymbol{\mu}_{m,\,c_m(x)}\,.
  \label{eq:rq_predict}
\end{align}
Unlearning is exact: decrement and re-materialize
(Figure~\ref{fig:architecture}).
The complete RQA pseudocode appears in
Algorithm~\ref{alg:qss} (Appendix~\ref{app:rqa_pseudocode}).

\section{Analysis}
\label{sec:guarantees}
\label{sec:codebook_guarantees}

Table~\ref{tab:methods} summarizes the operating modes, their
complexity, and deletion guarantees
(proofs in Appendix~\ref{app:qss_complexity}).

\begin{table}[t]
\centering
\caption{QSS complexity and deletion guarantees.
QSS-E, QSS-L, and QSS-DP differ in codebook training set
and noise; all use the RQA data structure
(Algorithm~\ref{alg:qss}). When a deletion hits
$\mathcal{D}^{\circ}$ (probability $n^{\circ}/n$), $g$ and the
codebook must be rebuilt.}
\label{tab:methods}
\smallskip
\begin{small}
\begin{tabular}{lcccc}
\toprule
 & QSS-E & QSS-L & QSS-DP & SISA \\
\midrule
Codebook & $\mathcal{D}^{\circ}$ & All $\mathcal{D}$
  & All $\mathcal{D}$ & --- \\
\midrule
Train & \multicolumn{3}{c}{$t_g(n^{\circ}) + O(n\ell bd)$} & $O(n)$ \\
Predict & \multicolumn{3}{c}{$t_g + O(\ell bd)$} & $O(S{\cdot}dk)$ \\
\midrule
Unlearn from $\mathcal{D}^{\bullet}$ & \multicolumn{3}{c}{$O(\ell^2 b^2 k)$}
  & $O(n/S)$ \\
Unlearn from $\mathcal{D}^{\circ}$ & \multicolumn{3}{c}{$t_{\text{rebuild}}$}
  & $O(n/S)$ \\
$\mathbb{E}[\text{unlearn}]$
  & \multicolumn{3}{c}{$(1{-}\tfrac{n^{\circ}}{n})\,O(\ell^2 b^2 k) +
  \tfrac{n^{\circ}}{n}\,t_{\text{rebuild}}$}
  & $O(n/S)$ \\
\midrule
Label del.\ & Exact & Exact
  & $\varepsilon$-DP$^\dagger$ & Exact \\
Instance del.\ & Exact & ---
  & Empirical DP & Exact \\
\bottomrule
\end{tabular}
\end{small}

\smallskip
{\footnotesize
$t_g$: base-model cost (linear probe or MLP, model-dependent).
QSS-E: codebook on $\mathcal{D}^{\circ}$ only (instance guarantee).
QSS-L: codebook on all data (better accuracy).
QSS-DP: + Gaussian noise on cell means guaranteeing $(\varepsilon,\delta)$-label DP (Proposition~\ref{prop:label_dp}).
}
\end{table}

\subsection{Unlearning}

\begin{proposition}[Exact Instance Unlearning in QSS-E]
\label{prop:exact_unlearning}
\label{cor:label_unlearning}
In the QSS-E method $\hat{y}(x) = g(x) + \alpha \cdot
\textsc{RQA.Predict}(x)$, where the schema ($g$, $\varphi$,
and the codebook) is trained on the retained
portion~$\mathcal{D}^{\circ}$ and the RQA accumulators are
filled on all data, deleting any point $(x_j, y_j) \in
\mathcal{D}^{\bullet}$ via $\textsc{RQA.Unlearn}$ produces
predictions bitwise identical to retraining the full system
from scratch on $\mathcal{D}^{-j}$.
\end{proposition}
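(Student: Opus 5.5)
The plan is to split the predictor $\hat y(x)=g(x)+\alpha\cdot\textsc{RQA.Predict}(x)$ into its \emph{frozen} part ($g$, $\varphi$, the codebook, and the encoder, plus $\alpha$) and its \emph{mutable} part (the accumulators $\mathbf{T}$, $n$, $N$). I will show each side is identical under (i) unlearning and (ii) retraining from scratch on $\mathcal{D}^{-j}$, with the same random draw of $\mathcal{D}^{\circ}$ and the same seeds.

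For the frozen part, note that in QSS-E every schema component is a deterministic function of $\mathcal{D}^{\circ}$ and the fixed randomness. Since $(x_j,y_j)\in\mathcal{D}^{\bullet}$, the retained subset of $\mathcal{D}^{-j}$ is still exactly $\mathcal{D}^{\circ}$. So retraining produces the same $g$, $\varphi$, codebook, and hence the same code map $x\mapsto \mathbf{C}(x)$ and the same residual targets $y^r_i=y_i-g(x_i)$ for every $i\neq j$. I will treat $\alpha$ as part of the schema, cross-validated on the retained fold, or else fixed. If the paper selects $\alpha$ on data that may include $x_j$, this has to be stated as an assumption. This is where QSS-E differs from QSS-L: in QSS-L the codebook depends on $x_j$, so this step fails for instance deletion.

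For the mutable part, the key observation is sum-decomposability. Each of $\mathbf{T}_{m,j'}$, $n_{m,j'}$, and $N_{k,m}[j',j'']$ in Equations~\eqref{eq:target_sums}--\eqref{eq:cooccurrence} is a sum over points of a per-point contribution. That contribution depends only on the point's (frozen) codes and residual. Fitting on $\mathcal{D}^{-j}$ therefore yields the full-data sums minus point $j$'s contribution, and $\textsc{RQA.Unlearn}$ performs exactly this subtraction of $y^r_j$ and the $O(\ell^2)$ unit decrements. Next, $\textsc{RQA.Materialize}$ (Equation~\eqref{eq:materialize}) is a deterministic function of $(\mathbf{T},n,N)$ alone. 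The same inputs therefore give the same $\mathbf{S}$, $\boldsymbol{\mu}$, and predictions via Equation~\eqref{eq:rq_predict}.

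The main obstacle is the word \emph{bitwise}. Floating-point addition is not associative, so $(\sum_i y^r_i)-y^r_j$ need not equal the sum computed directly over $\mathcal{D}^{-j}$. I would handle this with one of two assumptions. The first is that counts are integers and target sums are kept in exact arithmetic: integer labels or one-hot targets with a fixed-point or integer representation of $y^r$, or exactly representable residuals. Here $g$ is fixed, so $y^r_i$ is the identical float in both pipelines, and summation can be done in a wide fixed-point or compensated exact accumulator (e.g., integer-scaled). The second is that materialization is recomputed in a canonical order from the exact accumulators. Under either assumption, decrement-then-materialize equals build-then-materialize bit for bit. I also need to check the edge case $n_{m,j'}=0$ after deletion. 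Materialize must use the same convention, such as $\boldsymbol{\mu}=0$ for empty cells, in both paths, and that follows from its determinism.
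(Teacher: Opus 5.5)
Your proposal is correct and follows the paper's proof. The steps match: the schema is fixed by $\mathcal{D}^{\circ}$, so codes and residuals for all $i\neq j$ are unchanged; sum-decomposability makes \textsc{RQA.Unlearn} reproduce \textsc{RQA.Build} on $\mathcal{D}\setminus\{j\}$; and \textsc{RQA.Materialize} is a deterministic function of $(\mathbf{T},n,N)$, which the paper exhibits by deriving Equation~\eqref{eq:materialize} and you read off the algorithm. Your caveats, that ``bitwise'' needs exact (order-independent) accumulation and that $\alpha$ (and $b,\ell$) must be chosen independently of $\mathcal{D}^{\bullet}$, are assumptions the paper's proof uses silently; stating them sharpens the argument without changing its route.
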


\begin{proof}
Since the schema ($\varphi$, codebook, $g$) is trained on
$\mathcal{D}^{\circ}$ and $(x_j, y_j) \in \mathcal{D}^{\bullet}$,
deleting point~$j$ changes neither code assignments nor residual
targets.  The accumulators $(\mathbf{T}, n, N)$ are additive, so
\textsc{RQA.Unlearn} (subtract $j$'s contributions) produces
exactly the values from \textsc{RQA.Build} on
$\mathcal{D} \setminus \{j\}$.
It remains to show that \textsc{RQA.Materialize} is a
deterministic function of $(\mathbf{T}, n, N)$.

By definition,
$\mathbf{S}_{m,j} = \sum_{i:\,c_{i,m}=j} r_m(x_i)$ where
$r_m(x_i) = y_i - \sum_{k=1}^{m-1}
\boldsymbol{\mu}_{k,c_{i,k}}$.
Distributing the outer sum and grouping points in cell~$j$ at
level~$m$ by their cell at level~$k$:
\begin{align}
  \mathbf{S}_{m,j}
  &= \mathbf{T}_{m,j}
    - \sum_{k=1}^{m-1} \sum_{j'=1}^{b}
      \underbrace{|\{i : c_{i,k}{=}j',\, c_{i,m}{=}j\}|}_{N_{k,m}[j',j]}
      \cdot \boldsymbol{\mu}_{k,j'},
      \label{eq:materialize}
\end{align}
with $\boldsymbol{\mu}_{m,j} = \mathbf{S}_{m,j} / n_{m,j}$
($\mathbf{0}$ if $n_{m,j} = 0$).
Levels are computed sequentially ($m = 1, 2, \ldots$) because
$\boldsymbol{\mu}_{k,j'}$ depends on accumulators at levels $< k$.
Since \textsc{RQA.Materialize} is thus deterministic in
$(\mathbf{T}, n, N)$, the predictions after unlearning are
bitwise identical to retraining from scratch on
$\mathcal{D} \setminus \{j\}$.
\end{proof}

Exact \emph{instance} deletion (no trace of the point in any
model component) requires the codebook to be independent of
deletable data.  Refitting $k$-means after a single-point
removal can flip every code
(Appendix~\ref{app:fragility},~\cite{muresanu2025fast}); QSS-E
avoids this by training the codebook on
$\mathcal{D}^{\circ}$~only.

\begin{remark}[Exact Label Unlearning in QSS-L]
\label{rem:qssl_label}
Proposition~\ref{prop:exact_unlearning} is stated for QSS-E
(codebook on $\mathcal{D}^{\circ}$), but exact \emph{label}
deletion also holds in QSS-L (codebook on all $\mathcal{D}$).
The codebook is trained by $k$-means on feature vectors
$\varphi(x_i)$ without access to labels, so it is
label-blind: labels enter only through the accumulators
$\mathbf{T}_{m,j}$ and $n_{m,j}$.
Subtracting a point's label contributions from these
accumulators and re-materializing produces predictions
identical to rebuilding the supervised accumulators from scratch on
$\mathcal{D}^{-y_j}$ with the same codebook.
What QSS-L does \emph{not} guarantee is instance deletion:
the codebook centroids depend on the deleted point's features,
so traces of the point's input remain in the schema
(Table~\ref{tab:methods}).
\end{remark}

\begin{proposition}[Label DP in QSS-DP]
\label{prop:label_dp}
Adding $\mathcal{N}(0, \sigma^2)$ noise to each published cell
mean provides $(\varepsilon, \delta)$-label differential privacy
with
\begin{align}
  \varepsilon = \frac{\|\Delta y^r\|^2}{2\,n_{\min}\,\sigma^2}
  + \sqrt{\frac{2\,\|\Delta y^r\|^2\,\ln(1/\delta)}
  {n_{\min}\,\sigma^2}},
\end{align}
where $\|\Delta y^r\|$ is the maximum residual-label change
from a single point's deletion and
$n_{\min} = \min_{m,j:\, n_{m,j}>0} n_{m,j}$ is the smallest
occupied cell count across all levels and centroids
(Equation~\ref{eq:cell_counts}; empty cells are excluded from
prediction). The bound is independent of both depth~$\ell$
and width~$b$.
\end{proposition}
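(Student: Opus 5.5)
The plan is to treat the vector of published (noised) cell means as a Gaussian mechanism under the label-neighbouring relation $\mathcal{D}\sim\mathcal{D}^{-y_j}$. I would bound its $\ell_2$ sensitivity and then convert the resulting R\'enyi/zCDP-type guarantee into $(\varepsilon,\delta)$ form. The displayed expression has exactly the shape of the conversion for $\rho$-zCDP, namely $\varepsilon=\rho_z+2\sqrt{\rho_z\ln(1/\delta)}$, with $\rho_z=\Delta^2/(2\sigma^2)$. So the target is to show that the effective squared sensitivity is $\Delta^2=\|\Delta y^r\|^2/n_{\min}$. With that choice, $\rho_z=\|\Delta y^r\|^2/(2n_{\min}\sigma^2)$ and $2\sqrt{\rho_z\ln(1/\delta)}=\sqrt{2\|\Delta y^r\|^2\ln(1/\delta)/(n_{\min}\sigma^2)}$, which reproduces the statement.

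\textbf{Step 1 (what changes).} Since the codebook is label-blind (Remark~\ref{rem:qssl_label}), the code path $(c_{j,1},\dots,c_{j,\ell})$ of the affected point is fixed. Changing its label therefore only changes $\mathbf{T}_{m,c_{j,m}}$ by at most $\Delta y^r$ at each level $m$. The counts $n$ and $N$ are unchanged, provided label deletion is modelled as replacing the residual target, e.g.\ by $g(x_j)$.

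\textbf{Step 2 (sensitivity).} At level $m$ the touched cell mean moves by $\|\Delta y^r\|/n_{m,c_{j,m}}\le\|\Delta y^r\|/n_{\min}$. I would argue the total squared change across the published means is at most $\|\Delta y^r\|^2/n_{\min}$ using the bound $1/n^2\le 1/(n\,n_{\min})$. The intended route is to view each published mean as the average of per-point contributions, so that the per-cell Gaussian noise of variance $\sigma^2$ on a mean of $n$ points is equivalent to noise of variance $n^2\sigma^2$ on the sum. Then $\Delta^2/(\text{noise var})=\|\Delta y^r\|^2/(n^2\sigma^2)\le\|\Delta y^r\|^2/(n_{\min}\sigma^2)\cdot(1/n)$.

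\textbf{Step 3 (conversion).} Compose and convert the mechanism via zCDP to $(\varepsilon,\delta)$-DP, which yields the stated formula.

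\textbf{Main obstacle.} The hard part is justifying independence from $\ell$ and $b$. Materialization (Equation~\ref{eq:materialize}) propagates a level-$k$ change into every later level through $N_{k,m}$, and $\ell$ cells are touched. A naive composition gives a factor $\ell$ (and cascades could introduce $b$). I would first try to publish noised \emph{raw} per-level means of residual sums, i.e.\ $\mathbf{T}_{m,j}/n_{m,j}$ noised, with materialization done as post-processing from noised quantities, so that cascading is free. For the $\ell$ factor I would then attempt to exploit that the affected point's total contribution, weighted by $1/n_{m,\cdot}$ per level, telescopes or is absorbed into $n_{\min}$. If that fails, the honest fallback is a bound with $\ell\,\|\Delta y^r\|^2$ in place of $\|\Delta y^r\|^2$. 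I would flag that gap explicitly rather than claim depth-independence without proof.
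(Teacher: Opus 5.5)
Your proposal has a genuine gap, and it is the one you flag yourself. Nothing in Steps 1--3 removes the factor $\ell$, yet depth-independence is the substance of the statement.

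Step~2 also analyses the wrong object. What is noised and published are the materialized means $\boldsymbol{\mu}_{m,j}=\mathbf{S}_{m,j}/n_{m,j}$, not $\mathbf{T}_{m,j}/n_{m,j}$. For $m\ge 2$, one label change moves many level-$m$ cells through $N_{k,m}$, and the touched cell does not in general move by $\|\Delta y^r\|/n_{m,c_{j,m}}$. Even for raw means the claimed total does not follow. Summing $\|\Delta y^r\|^2/n_{m,c_{j,m}}^2\le\|\Delta y^r\|^2/(n_{m,c_{j,m}}n_{\min})$ over levels gives $\|\Delta y^r\|^2 n_{\min}^{-1}\sum_m 1/n_{m,c_{j,m}}$. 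This is at most $\|\Delta y^r\|^2/n_{\min}$ only if $\sum_m 1/n_{m,c_{j,m}}\le 1$ (e.g.\ when $\ell\le n_{\min}$). If the point is alone in its cell at two levels, the raw-means squared sensitivity is $2\|\Delta y^r\|^2$, against the claimed $\|\Delta y^r\|^2/n_{\min}=\|\Delta y^r\|^2$. Recasting the noise as variance $n^2\sigma^2$ on the sum is the same one-cell computation. So your raw-means-plus-post-processing fallback both changes the mechanism and is not depth-free.

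The missing idea is that the cascade helps rather than hurts, by Pythagoras. Let $\mathbf{x}_m$ (one entry per training point, applied coordinatewise for vector-valued labels) be the change in the level-$m$ residuals. Then $\mathbf{x}_0$ equals $\Delta y^r$ at the affected point and $0$ elsewhere. With codes and counts fixed (your Step~1 modelling, which is also the paper's), $\mathbf{x}_m=(I-P_m)\mathbf{x}_{m-1}$. Here $P_m$ is the orthogonal projection replacing each entry by its level-$m$ cell average. The change $\delta_{m,c}$ of the published mean of cell $c$ is the common value of $P_m\mathbf{x}_{m-1}$ on cell $c$. Hence $\sum_c n_{m,c}\delta_{m,c}^2=\|P_m\mathbf{x}_{m-1}\|^2=\|\mathbf{x}_{m-1}\|^2-\|\mathbf{x}_m\|^2$, and
\[
\sum_{m,c}\delta_{m,c}^2\;\le\;\frac{1}{n_{\min}}\sum_{m=1}^{\ell}\bigl(\|\mathbf{x}_{m-1}\|^2-\|\mathbf{x}_m\|^2\bigr)\;\le\;\frac{\|\Delta y^r\|^2}{n_{\min}}.
\]
The vector of all materialized means is linear in the residual labels. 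Adding i.i.d.\ $\mathcal{N}(0,\sigma^2)$ noise is therefore a single Gaussian mechanism with this squared $L_2$ sensitivity, i.e.\ $\rho_z$-zCDP with $\rho_z=\|\Delta y^r\|^2/(2n_{\min}\sigma^2)$. Your conversion $\varepsilon=\rho_z+2\sqrt{\rho_z\ln(1/\delta)}$ then gives exactly the displayed formula, and $b$ never enters. In the singleton example, level~1 absorbs the whole perturbation, $\mathbf{x}_1=0$, and no deeper mean moves.

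The paper's appendix proof does not supply this step either. Per-cell Cauchy--Schwarz plus the mere contraction $\|\mathbf{x}_m\|\le\|\mathbf{x}_{m-1}\|$ only reaches $\rho_z\le\ell\|\Delta y^r\|^2/(2n_{\min}\sigma^2)$. Its quoted conversion $\rho_z+\sqrt{2\rho_z\ln(1/\delta)}$ also does not reproduce the displayed $\varepsilon$, whereas yours does. The energy identity above is what makes the main-text assertion that the per-level costs telescope correct.
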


\noindent Detailed proof in Appendix~\ref{app:proof_dp}.
Each centering step in \textsc{RQA.Materialize} is an
orthogonal projection, so the $L_2$ perturbation energy
from a single label change is non-increasing across levels.
Under zero-concentrated DP~\cite{bun2016concentrated}, the
per-level privacy costs $\rho_m = \|\boldsymbol{\delta}_m\|_2^2
/ (2\sigma^2)$ telescope:
$\sum_m \rho_m \leq \|\Delta y^r\|^2 / (2\,n_{\min}\,\sigma^2)$,
eliminating the depth factor entirely.
Note that QSS-DP provides \emph{label}-DP only: the noise masks
the influence of individual labels on cell means, but the codebook
and code assignments may still depend on the point's features
(unless QSS-E is used).
We perform empirical studies using membership inference
attacks to measure residual membership signal
(\S\ref{sec:experiments}).

\subsection{Operational guarantees}
\label{sec:amortized}
\label{prop:amortized}
\label{prop:ctmc}
\label{rem:adversarial}

When a deletion invalidates a instance, it begins rebuilding
immediately while the remaining instances continue serving.

\begin{proposition}[Rebuild Scheduling]
\label{prop:rebuild}
Under immediate rebuilds with $p{-}1$~instances serving during
each rebuild window~$T_r$, and deletion requests arriving as a
Poisson process with rate~$\lambda_d$:
\begin{enumerate}\setlength\itemsep{2pt}
\item Outage probability per window:
  $P_{\textup{out}} \approx
  (1 - e^{-\rho\lambda_d T_r})^{p-1}$,
  exponentially small in~$p$.
\item Rebuild overhead: at most $p\,H_p$ times the offline
  optimal, where $H_p = \sum_{i=1}^{p} 1/i$
  ($5.5{\times}$ for $p{=}3$).
\end{enumerate}
\end{proposition}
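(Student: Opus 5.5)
The outage bound follows from Poisson thinning plus independence of the $p$ random schemas. The overhead bound follows from an online-versus-offline accounting, in the style of the harmonic bounds for online covering and caching.

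\emph{Part 1 (outage).} Fix one rebuild window of length $T_r$, opened when some instance is invalidated. Each deletion hits the retained set $\mathcal{D}^{\circ}_i$ of a given instance $i$ independently with probability $\rho=n^{\circ}/n$, because the retained subsets are drawn uniformly at random. By Poisson thinning, the requests that invalidate instance $i$ then form a Poisson process of rate $\rho\lambda_d$. Hence the probability that instance $i$ is invalidated at least once during the window is $1-e^{-\rho\lambda_d T_r}$.

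An outage requires all $p-1$ serving instances to be invalidated within the same window. If the $p-1$ thinned processes were independent, this probability would be exactly $(1-e^{-\rho\lambda_d T_r})^{p-1}$. They are not quite independent, for two reasons:
\begin{itemize}
\item each request is a single point, which can lie in several retained sets at once;
\item the retained sets are sampled without replacement.
\end{itemize}
I would bound the discrepancy through the $O(\rho^2)$ overlap terms, which justifies the ``$\approx$'' in the statement. Exponential smallness in $p$ is then immediate, since the base $1-e^{-\rho\lambda_d T_r}\le \rho\lambda_d T_r$ is less than $1$ in the regime $\rho\lambda_d T_r<1$. I would also note that invalidations starting a second window are handled by restarting the argument at that window's start.

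\emph{Part 2 (overhead).} Each request is a set $R\subseteq[p]$ of instances it invalidates. An offline scheduler that knows the whole sequence only needs to keep at least one instance serving, so it may defer some rebuilds. I would argue as follows.
\begin{itemize}
\item Partition time into phases. A phase ends when every instance has been invalidated at least once since the phase began.
\item Any schedule, offline or online, must perform at least one rebuild per phase; otherwise no valid instance survives. So OPT pays at least $1$ per phase.
\item The immediate-rebuild policy pays at most $p$ rebuilds per phase for the first invalidations of each instance. Instances invalidated repeatedly within a phase add further rebuilds.
\item To charge those repeats, I would order instances by their first invalidation in the phase. The $i$-th one still to be hit then competes with the remaining $p-i+1$ instances, which yields the coupon-collector sum $\sum_i 1/(p-i+1)=H_p$ in expectation.
\end{itemize}
Combining gives $p\,H_p$. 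For $p=3$ this is $3\cdot 11/6=5.5$, matching the $5.5{\times}$ in the statement.

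The main obstacle is making Part 2 rigorous. The $H_p$ factor only arises in expectation over the random schemas, the way it does for randomized marking in paging. I would first try to adapt the marking-algorithm analysis, treating instances as pages and rebuilds as faults. If a clean competitive ratio resists proof, I would fall back on stating the bound in expectation, assuming an oblivious adversary who fixes the deletion sequence before the schemas are drawn.
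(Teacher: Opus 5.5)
Your Part~1 is essentially the paper's argument. The paper conditions on the number of deletions in the window, $D\sim\mathrm{Poisson}(\lambda_dT_r)$, so that survivals are independent given $D$. This gives:
\begin{itemize}
\item the marginal survival $e^{-\rho\lambda_dT_r}$, which is your thinning step;
\item the exact form $P_{\textup{out}}=\mathbb{E}\bigl[(1-(1-\rho)^D)^{p-1}\bigr]$;
\item a Jensen lower bound equal to your product formula;
\item a multiplicative correction $1+O\bigl(\binom{p-1}{2}\rho^2\lambda_dT_r\bigr)$ driven by $k\rho-\bigl(1-(1-\rho)^k\bigr)$, which is exactly your overlap term.
\end{itemize}
Conditioning on $D$ is the cleanest way to execute your plan, and it also settles the sign of the error, which you left open.

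Part~2 has a gap in how you count the eager policy's rebuilds. Your phase lower bound is the paper's coupon-collector bound: with no rebuild inside a phase, every instance is dead at its end, so OPT pays at least one rebuild per phase. Your ordering argument yields $\sum_{k=1}^{p}1/\bigl(1-(1-\rho)^k\bigr)\approx H_p/\rho$. But that sum is the expected number of \emph{deletions} in a phase, not a count of repeat invalidations. Moreover, ``$p$ first invalidations plus repeats'' is additive, so nothing in your outline produces the product $p\cdot H_p$.

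The missing step is a rate identity. Each deletion triggers an eager rebuild with probability $1-(1-\rho)^p\approx p\rho$. In the paper's model a rebuild restores all $p$ instances, so any deletion that hits some instance is a trigger, and there is no separate class of repeats. Since the end of a phase is a stopping time, Wald's identity gives about $p\rho\cdot H_p/\rho=pH_p$ expected eager rebuilds per phase, against at least one for OPT. This is exactly the paper's comparison of the inter-rebuild gaps $1/(p\rho)$ and $H_p/\rho$.

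Your intended route through the randomized-marking analysis would not work. In marking, $H_k$ comes from the algorithm's own coin flips against an oblivious adversary. Here the eager policy is deterministic, and $H_p$ comes from the randomness of which instances a deletion hits. With adversarial hit sets there is no worst-case competitive ratio at all. For example, if $M$ deletions all hit instance~1 before any other instance is touched, eager rebuilds at least $M$ times within a single phase while OPT rebuilds once.

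The claim is therefore inherently a ratio of expected rebuild rates under independent $\mathrm{Bernoulli}(\rho)$ hits. That is what secret, uniformly random $\mathcal{D}^{\circ}$ membership provides against an oblivious adversary. It should be your primary formulation, as it is in the paper, not a fallback.
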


\noindent Proofs in
Appendices~\ref{app:qss_complexity}
and~\ref{app:rebuild_proofs}.
Under the eager policy, the expected inter-rebuild gap is
$1/(p\rho)$ deletions, so the expected cost per deletion is
$O(\ell^2 b^2 k)$ (fast-path re-materialization) plus
$p\rho\,t_{\textup{rebuild}}$ (amortized rebuild).
The offline optimal achieves gap $H_p/\rho$
(coupon-collector lower bound), giving the $p\,H_p$
competitive ratio.
Given that $\mathcal{D}^{\circ}$ membership is secret, an
adversary controls only the \emph{rate} of deletions, not which
instances are invalidated; a batch of $B$ simultaneous deletions
invalidates all $p$~instances with probability
$(1{-}(1{-}\rho)^B)^p$ ($B > 1{,}300$ to exceed~$1\%$
at $\rho{=}0.5\%$, $p{=}3$).
Steady-state cost rates for deployment sizing appear
in Appendix~\ref{app:rebuild_proofs}.

\subsection{Prediction quality}

\begin{proposition}[Variance Decay]
\label{prop:variance_decay}
Within the RQ cascade, the residual variance after $m$ levels
satisfies
$\mathrm{Var}(r_m) = \mathrm{Var}(y^r)\,\prod_{j=1}^{m}(1 - \eta_j)$,
where $\eta_m = \mathrm{Var}(\mathbb{E}[r_{m-1} \mid c_m]) /
\mathrm{Var}(r_{m-1})$ is the variance explained at level~$m$.
If $\eta_m \geq \gamma > 0$, MSE decays as $(1{-}\gamma)^\ell$.
\end{proposition}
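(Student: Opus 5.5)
The identity follows from the law of total variance applied level by level, followed by a telescoping product. Fix the population (or empirical) distribution of residual targets $y^r$ and set $r_0 = y^r$. At level $m$ the RQA prediction subtracts the cell mean of the current residual, as computed by \textsc{RQA.Materialize} (Equation~\ref{eq:materialize}). We therefore define
\[
r_m = r_{m-1} - \mathbb{E}[r_{m-1}\mid c_m],
\]
with the empirical cell mean $\boldsymbol{\mu}_{m,c_m}$ playing the role of the conditional expectation. The first step is to verify this identification. By Equation~\ref{eq:materialize}, $\mathbf{S}_{m,j}$ is exactly the sum over cell $j$ of $y^r_i - \sum_{k<m}\boldsymbol{\mu}_{k,c_{i,k}} = r_{m-1}(x_i)$, so $\boldsymbol{\mu}_{m,j} = \mathbf{S}_{m,j}/n_{m,j}$ is the within-cell empirical mean of $r_{m-1}$.

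Next, apply the law of total variance conditional on $c_m$:
\[
\mathrm{Var}(r_{m-1}) = \mathbb{E}[\mathrm{Var}(r_{m-1}\mid c_m)] + \mathrm{Var}(\mathbb{E}[r_{m-1}\mid c_m]).
\]
The first term equals $\mathbb{E}[\|r_m\|^2]$ minus the squared mean of $r_m$. Because $r_m$ has zero conditional mean within every cell, its overall mean is zero and $\mathrm{Var}(r_m) = \mathbb{E}[\mathrm{Var}(r_{m-1}\mid c_m)]$. By the definition of $\eta_m$, this yields
\[
\mathrm{Var}(r_m) = (1-\eta_m)\,\mathrm{Var}(r_{m-1}).
\]
Geometrically this is the orthogonal-projection view already used for Proposition~\ref{prop:label_dp}: centering within cells projects onto the orthogonal complement of the cell-indicator functions. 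For vector-valued labels, read $\mathrm{Var}$ as the trace of the covariance, so that the decomposition holds coordinatewise and sums.

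Unrolling the one-step identity by induction on $m$ gives
\[
\mathrm{Var}(r_m) = \mathrm{Var}(y^r)\prod_{j=1}^m(1-\eta_j).
\]
If $\eta_j \ge \gamma$ for all $j$, each factor is at most $1-\gamma$, so $\mathrm{Var}(r_\ell) \le (1-\gamma)^\ell\,\mathrm{Var}(y^r)$. The training MSE of the predictor $g + \textsc{RQA.Predict}$ with $\alpha = 1$ is $\mathbb{E}\|r_\ell\|^2$. This equals $\mathrm{Var}(r_\ell)$ because $r_\ell$ has mean zero, which gives the claimed decay.

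The main obstacle is ensuring that the empirical cell means coincide with the conditional expectations. This requires stating the result for the empirical distribution of the accumulated data, or else for the idealized population cascade. It also requires handling empty cells, where $\boldsymbol{\mu} = \mathbf{0}$; these have zero mass, so they do not affect the expectations. I would state explicitly that the proposition concerns in-sample (or population-level) residuals with $\alpha = 1$. Generalization to held-out queries and cross-validated $\alpha$ is not covered by this argument.
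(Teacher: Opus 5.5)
Your proof is correct and follows essentially the same route as the paper's: the law of total variance conditional on $c_m$, the observation that $r_m = r_{m-1} - \mathbb{E}[r_{m-1}\mid c_m]$ has zero conditional mean so that $\mathrm{Var}(r_m) = \mathbb{E}[\mathrm{Var}(r_{m-1}\mid c_m)] = (1-\eta_m)\,\mathrm{Var}(r_{m-1})$, and then unrolling the recursion. Your additions make explicit what the paper leaves implicit: you identify $\boldsymbol{\mu}_{m,j}$ as the within-cell empirical mean of $r_{m-1}$ via the materialization recurrence, you use the zero-mean step to turn $\mathrm{Var}(r_\ell)$ into the MSE, and you restrict the claim to in-sample residuals with $\alpha = 1$.
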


\noindent Proof in Appendix~\ref{app:proof_variance}.
The cascade is analogous to gradient
boosting~\cite{friedman2001greedy} with quantization centroids as
weak learners. Excess risk scales as $b/n$, not $b^\ell/n$: each
level estimates cell means independently with $\sim n/b$ points
(Appendix~\ref{app:excess_risk}).

\medskip

\begin{remark}[Depth Collapse]
\label{rem:depth_collapse}
Because residual norms shrink geometrically
(Proposition~\ref{prop:variance_decay}), deeper levels add
estimation noise without reducing approximation error.  The
practical rule is \emph{width $>$ depth}
(Appendix~\ref{app:excess_risk}).
\end{remark}

\section{Experimental Results}
\label{sec:experiments}

\subsection{Setup}
\label{sec:exp_setup}

We evaluate on 15 datasets with $n \geq 50$K (where deletion speed
is a genuine bottleneck), spanning three modalities and $k$ from~2
to~1{,}000 (Table~\ref{tab:summary}).
A frozen encoder maps inputs to $L_2$-normalized embeddings:
CLIP ViT-L/14 (768-d) for 5 vision tasks,
all-MiniLM-L6-v2 (384-d) for 4 text tasks, and
LightGBM leaf contributions (trained on $\mathcal{D}^{\circ}$) for
6 tabular tasks.
All 27 viable datasets are reported in
Appendix~\ref{app:additional_experiments}; no dataset with
$n \geq 50$K was excluded based on QSS performance.

\paragraph{Data splits and operating point.}
We set $\rho = n^{\circ}/n = 0.5\%$ throughout, so that 0.5\% of
the data is retained and 99.5\% is deletable.  This gives
$n^{\circ}$ ranging from 256 (SST-2) to 41K (HIGGS).
A base model $g$ (linear probe when $n^{\circ} < 5{,}000$;
one-hidden-layer MLP otherwise) is trained on $\mathcal{D}^{\circ}$
and frozen.  QSS corrects $g$'s residuals using accumulators built
on all of $\mathcal{D}$.  The blending weight $\alpha$ and codebook
hyperparameters $(b, \ell)$ are cross-validated on
$\mathcal{D}^{\circ}$; $\alpha = 0$ recovers $g$ exactly,
guaranteeing QSS~$\geq$~$\mathcal{D}^{\circ}$-only.

\paragraph{Baselines.}
We compare three baselines.
\emph{$\mathcal{D}^{\circ}$-only} (floor) trains the base model~$g$
on $\mathcal{D}^{\circ}$ alone; QSS is guaranteed to match or exceed
this by construction ($\alpha{=}0$ fallback).
\emph{SISA}~\cite{bourtoule2021machine} (ceiling) partitions
$\mathcal{D}^{\bullet}$ into $S{=}5$ shards, trains the same model
architecture as~$g$ on $\mathcal{D}^{\circ} \cup \text{shard}_k$, and
majority-votes predictions; unlearning retrains the affected shard at
cost $O(n/S)$. We use $S{=}5$
following~\cite{bourtoule2021machine}, the most favorable accuracy
setting for SISA since larger~$S$ reduces per-shard accuracy.
\emph{Exact ridge}~\cite{quan2026exact} fits a ridge head using all training labels and maintains
$X^\top X$ and $X^\top Y$ for rank-one exact deletion. We report matched
submitted-split comparisons on AG~News and SST-2, the two tasks for which the
submitted frozen-SBERT pipeline can be reconstructed exactly
(Appendix~\ref{app:analytic}).

Table~\ref{tab:baselines} places QSS among existing
methods: QSS is exact and $n$-independent, unlike SISA, and
avoids the full LLM forward pass
of~\cite{muresanu2025fast}.

\begin{table}[t]
\centering
\caption{Architectural comparison with unlearning methods.}
\label{tab:baselines}
\smallskip
\begin{small}
\begin{tabular}{lccccc}
\toprule
Method & Exact? & Unlearn & Predict & Indep.\ of $n$? \\
\midrule
Retrain & Yes & $O(n)$ & 0 & No \\
SISA~\cite{bourtoule2021machine} & Yes & $O(n/S)$ & $\times S$ & No \\
Certified~\cite{guo2020certified} & $\varepsilon$ & $O(d^3)$ & 0 & Yes \\
SLUG~\cite{cai2024slug} & No & $O(d^2)$ & 0 & Yes \\
\cite{muresanu2025fast} & Yes & $O(1)$ & LLM fwd & Yes \\
Ridge/ACU~\cite{tang2025acu,quan2026exact} & Yes & $O(d^2{+}dk)$ & $O(dk)$ & Yes \\
\midrule
QSS (ours) & Yes & $O(\ell^2 b^2 k)$ & $O(\ell bd)$ & Yes \\
\bottomrule
\end{tabular}
\end{small}
\end{table}

\subsection{The Accuracy--Speed Tradeoff}
\label{sec:pareto}

The retention ratio $\rho = n^{\circ}/n$ controls the
accuracy--speed tradeoff. A larger $\rho$ yields a stronger base
model $g$ and a higher-fidelity codebook, but increases the
rebuild probability. The expected per-deletion cost is
\begin{equation}
\mathbb{E}[t] \;=\; (1{-}\rho)\,t_{\text{RQ}} \;+\;
\rho\,t_{\text{rebuild}}\,,
\label{eq:pareto}
\end{equation}
where $t_{\text{RQ}} = O(\ell^2 b^2 k)$ is the fast-path
accumulator update (independent of $n$) and
$t_{\text{rebuild}}$ is the full pipeline cost (retrain $g$,
re-extract embeddings, retrain codebook, re-encode and
re-accumulate).
At $\rho{=}0.5\%$, the $\rho\,t_{\text{rebuild}}$ term is
small even when $t_{\text{rebuild}}$ itself is large: for
HIGGS ($n{=}11$M), the expected cost is ${\approx}\,18$\,s
vs.\ SISA's $114$\,s (Table~\ref{tab:timing}).
Figure~\ref{fig:pareto_d0} traces the tradeoff as $\rho$
varies from $0.5\%$ to $5\%$; most low-$k$ datasets cluster
near the break-even line, while high-$k$ outliers narrow
their gap as $\rho$ increases.

\begin{figure}[t]
\centering
\includegraphics[width=0.75\linewidth]{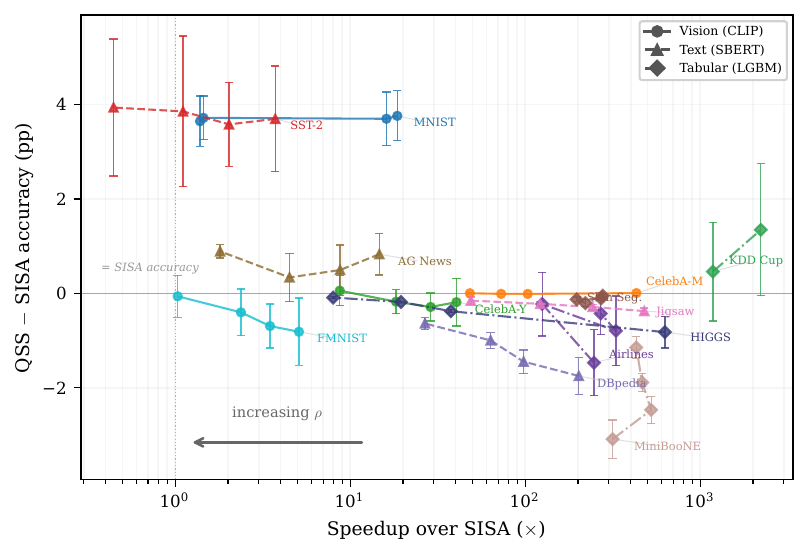}
\caption{QSS-L accuracy--speedup Pareto as $\rho$ varies from
$0.5\%$ to $5\%$ on the 13 low-$k$ datasets. Both axes are
relative to SISA ($x{=}1$, $y{=}0$). Each trajectory traces
one dataset; moving \emph{left} corresponds to increasing $\rho$
(lower speedup, higher accuracy).
Most trajectories cluster near the break-even line, trading
modest accuracy for large speedups.
Full version with all 15 datasets in
Figure~\ref{fig:pareto_d0_full} (appendix).}
\label{fig:pareto_d0}
\end{figure}
\subsection{Accuracy and Timing}
\label{sec:quality}

\begin{table}[t]
\centering
\caption{Guarantee-aligned accuracy at
$\rho{=}0.5\%$ on low-$k$ datasets (mean $\pm$ std over
5 random splits; $^\ddagger$2--3 seeds).
$g$-only: base model on $\mathcal{D}^{\circ}$ (floor).
QSS-L: one fixed label-deletion construction per feature pipeline; modes are not
selected using test accuracy. QSS-E is reported separately in
Appendix~\ref{app:mode_separation}.
SISA: $S{=}5$ shard retraining~\cite{bourtoule2021machine}.
$\Delta$: QSS-L$\,-\,$SISA accuracy (pp).
High-$k$ and QSS-E results appear in
Table~\ref{tab:fixed_modes} (appendix).}
\label{tab:summary}
\label{tab:timing_main}
\smallskip
\begin{small}
\setlength{\tabcolsep}{2pt}
\begin{tabular}{llrcccc}
\toprule
Task & $~k$ & $n~~$
  & $g$-only & QSS-L & SISA
  & $\Delta$ \\
\midrule
\multicolumn{7}{l}{\emph{Vision (CLIP ViT-L/14)}} \\
MNIST         & ~10 &   53K~~ & .814{\tiny$\pm$.009} & .944{\tiny$\pm$.003} & .921{\tiny$\pm$.001} & $+$2.4 \\
FMNIST        & ~10 &   53K~~ & .749{\tiny$\pm$.015} & .825{\tiny$\pm$.014} & .854{\tiny$\pm$.003} & $-$2.9 \\
CelebA Young  & ~2  &  137K~~ & .886{\tiny$\pm$.002} & .893{\tiny$\pm$.005} & .902{\tiny$\pm$.001} & $-$0.9 \\
CelebA Male   & ~2  &  137K~~ & .992{\tiny$\pm$.000} & .993{\tiny$\pm$.001} & .993{\tiny$\pm$.001} & $+$0.0 \\
\midrule
\multicolumn{7}{l}{\emph{Text (all-MiniLM-L6-v2)}} \\
SST-2         & ~2  &   51K~~ & .785{\tiny$\pm$.008} & .854{\tiny$\pm$.020} & .835{\tiny$\pm$.003} & $+$1.8 \\
AG News       & ~4  &   96K~~ & .861{\tiny$\pm$.003} & .899{\tiny$\pm$.004} & .895{\tiny$\pm$.001} & $+$0.4 \\
DBpedia       & ~14 &  473K~~ & .952{\tiny$\pm$.002} & .958{\tiny$\pm$.004} & .977{\tiny$\pm$.001} & $-$1.9 \\
Jigsaw        & ~2  & 1.4M~~  & .923{\tiny$\pm$.000} & .924{\tiny$\pm$.001} & .927{\tiny$\pm$.000} & $-$0.4 \\
\midrule
\multicolumn{7}{l}{\emph{Tabular (LightGBM leaf contributions)}} \\
Skin Seg.     & ~2  &  245K~~ & .986{\tiny$\pm$.002} & .997{\tiny$\pm$.001} & .999{\tiny$\pm$.000} & $-$0.2 \\
Airlines      & ~2  &  500K~~ & .627{\tiny$\pm$.004} & .640{\tiny$\pm$.002} & .655{\tiny$\pm$.007} & $-$1.5 \\
MiniBooNE     & ~2  &  130K~~ & .903{\tiny$\pm$.004} & .912{\tiny$\pm$.004} & .943{\tiny$\pm$.002} & $-$3.1 \\
HIGGS$^\ddagger$ & ~2 & 11M~~ & .728{\tiny$\pm$.000} & .730{\tiny$\pm$.001} & .740{\tiny$\pm$.003} & $-$1.0 \\
KDD Cup       & ~17 &  494K~~ & .996{\tiny$\pm$.001} & .999{\tiny$\pm$.000} & .994{\tiny$\pm$.011} & $+$0.5 \\
\bottomrule
\end{tabular}
\end{small}
\end{table}

Table~\ref{tab:summary} reports accuracy and expected deletion
latency at $\rho{=}0.5\%$ on 13 low-$k$ datasets (mean over
5 random splits; full results including high-$k$ tasks in
Table~\ref{tab:fixed_modes}, appendix).
QSS matches or exceeds $\mathcal{D}^{\circ}$-only on all
datasets, confirming that cross-validated $\alpha$ reliably
extracts value from deletable data without risk.

Fixed QSS-L exceeds SISA accuracy on 4 of 13 datasets
(MNIST, SST-2, AG~News, and KDD~Cup) and trails by
${\leq}\,2$\,pp on 7~more.
The separately measured QSS-L operating points achieve 4--483$\times$ lower
expected deletion cost (Figure~\ref{fig:pareto_d0} and
Table~\ref{tab:timing}); every latency includes the rare rebuild event.

\paragraph{Deletion timing.}
The fast path dominates: at $\rho{=}0.5\%$, the
$\rho\,t_{\text{rebuild}}$ term contributes $<1$\,s for most
datasets.
On Jigsaw ($n{=}1.4$M), QSS-L deletes in 0.2\,s expected vs.\
SISA's 109\,s ($483\times$).
The exceptions are HIGGS and KDD~Cup, where large
$t_{\text{rebuild}}$ limits the speedup to~$6\times$
(Table~\ref{tab:timing}, appendix).

Aggregate analysis (Figure~\ref{fig:aggregate}, appendix)
confirms that $k$ is the primary factor: the gap clusters near
zero for low-$k$ (median $-0.4$\,pp) but separates at
$k{=}1{,}000$ (\S\ref{sec:limitations}).

\subsection{Correctness and Privacy}
\label{sec:unlearn_correct}
\label{sec:privacy}

We verify the additive exactness mechanism empirically by deleting 100 points
and comparing accumulator-subtracted predictions to rebuilt statistics. Across
nine datasets spanning all three modalities, \emph{all differences are exactly
0.0}: bitwise identical for the relevant fixed-schema relation. A targeted
QSS-L audit on AG~News obtains the same result for batches of 1, 10, 100, and
1,000 label deletions. Deleting up to 50\% of ImageNet (640K points) likewise
produces bitwise-identical predictions, with only 0.25\,pp accuracy loss from
the reduced labels (Table~\ref{tab:retention}, appendix).

\begin{figure}[t!]
\centering
\begin{subfigure}[t]{0.52\linewidth}
  \centering
  \includegraphics[width=\linewidth]{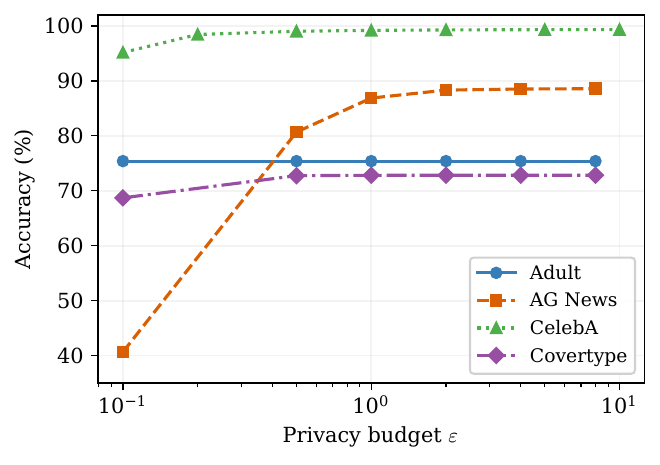}
  \caption{$\varepsilon$-label-DP accuracy.
  Adult (binary) is insensitive; Covertype (7-class) and
  AG~News (4-class) degrade gracefully.}
  \label{fig:epsilon_accuracy}
\end{subfigure}\hfill
\begin{subfigure}[t]{0.42\linewidth}
  \centering
  \includegraphics[width=\linewidth]{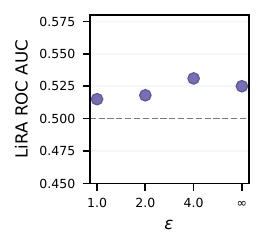}
  \caption{LiRA membership inference on CelebA.
  AUC~$\approx 0.52$ at all $\varepsilon$, including
  $\varepsilon{=}\infty$.}
  \label{fig:mia}
\end{subfigure}
\caption{Privacy validation.
\textbf{(a)}~DP noise preserves most accuracy at
$\varepsilon \geq 1$.
\textbf{(b)}~Even without DP, QSS leaks no membership
information (64 shadow models, 1000 audit points).}
\label{fig:privacy_mechanism}
\end{figure}

QSS predictions depend on cell-level aggregates ($n_{m,j} \gg 1$), not individual points, limiting membership leakage by construction (Figure~\ref{fig:privacy_mechanism}). Adding $(\varepsilon,\delta)$-label DP (Proposition~\ref{prop:label_dp}) preserves most accuracy at $\varepsilon \geq 1$ (panel~a). A LiRA attack~\cite{carlini2022membership} (64 shadow models, 1000 audit points) achieves AUC~$\approx 0.52$, even at $\varepsilon{=}\infty$ (panel~b; Table~\ref{tab:lira}, appendix).

\subsection{Discussion and Limitations}
\label{sec:limitations}

\paragraph{High-$k$ gap.}
On ImageNet ($k{=}1{,}000$, $-15.6$\,pp) and Covertype
($k{=}7$, $-10.7$\,pp), QSS lags SISA at $\rho{=}0.5\%$.
The codebook partitions embedding space without regard to
class boundaries, diluting cell-mean signal when $k$ is large.
Level-0 cell purity predicts this: MNIST achieves 96.6\%
purity ($+3.8$\,pp over SISA), Covertype only 63.1\%
(Figure~\ref{fig:umap_purity}, appendix).
The gap narrows with $\rho$: Covertype shrinks to $-5$\,pp
at $\rho{=}5\%$ (Table~\ref{tab:timing}).

\paragraph{Rebuild cost and practical guidance.}
The $\rho\,t_{\text{rebuild}}$ term scales with $n$; for HIGGS
($n{=}11$M), this limits the speedup to~$67\times$.
On tabular datasets, accumulator re-fill converges early
(1\% of~$n$ for Skin Segmentation, 10\% for HIGGS;
Table~\ref{tab:convergence}, appendix).
At $\rho{=}0.5\%$ with $k{\leq}17$, QSS trails SISA by at
most 3\,pp while deleting 4--483$\times$ faster; SISA remains
preferable when $k$ is large.

\paragraph{Schema capacity.}
The submitted linear and one-hidden-layer heads are operating points, not method
restrictions. A richer schema may improve difficult tasks, but a larger retained
set raises both rebuild probability and rebuild cost. On Covertype, a three-layer
head gains little over linear at $\rho=0.5\%$ but gains $10.74$ pp at
$\rho=5\%$, while its training time rises from $1.29$ to $14.65$ seconds.
Appendix~\ref{app:capacity} reports the joint capacity--schema sweep and shows
that the benefit is dataset-dependent.

\subsection{Summary}
\label{sec:conclusion}

QSS separates \emph{schema} from \emph{content}, reducing ordinary deletion to constant-time arithmetic independent of~$n$; redundant schemas drive rebuild probability to~$\rho^p$ with zero accuracy loss. QSS-L provides exact label deletion with the competitive accuracy reported above, while QSS-E exposes the additional cost of exact instance deletion. Unlike SISA's $S$ pipelines, QSS is a single model with a sidecar data structure.


\begin{ack}
This work received no external funding. The authors declare that they have no
competing interests.
\end{ack}

\bibliographystyle{plain}
\bibliography{references}

@article{tang2025acu,
  title={Towards Efficient and Exact Forgetting Services in Pre-Trained-Model-based Continual Learning},
  author={Huang, Yajiang and Tang, Jianheng and Fan, Kejia and Zhuang, Huiping and Liu, Anfeng and Wang, Tian and Liu, Yunhuai and Dong, Mianxiong and Song, Houbing Herbert},
  journal={arXiv preprint arXiv:2505.12239},
  year={2025}
}

@article{quan2026exact,
  title={Exact Federated Continual Unlearning for Ridge Heads on Frozen Foundation Models},
  author={Quan, Yijun and Wu, Wentai and Montana, Giovanni},
  journal={arXiv preprint arXiv:2603.12977},
  year={2026}
}

@inproceedings{ke2017lightgbm,
  title={LightGBM: A Highly Efficient Gradient Boosting Decision Tree},
  author={Ke, Guolin and Meng, Qi and Finley, Thomas and Wang, Taifeng and Chen, Wei and Ma, Weidong and Ye, Qiwei and Liu, Tie-Yan},
  booktitle={Advances in Neural Information Processing Systems},
  volume={30},
  pages={3146--3154},
  year={2017},
  eprint={1711.05101},
  archiveprefix={arXiv}
}

@inproceedings{brophy2021machine,
  title={Machine Unlearning for Random Forests},
  author={Brophy, Jonathan and Lowd, Daniel},
  booktitle={International Conference on Machine Learning},
  pages={1092--1104},
  year={2021},
  organization={PMLR},
  eprint={2009.05567},
  archiveprefix={arXiv}
}

@inproceedings{ge2013optimized,
  title={Optimized Product Quantization},
  author={Ge, Tiezheng and He, Kaiming and Ke, Qifa and Sun, Jian},
  booktitle={IEEE Transactions on Pattern Analysis and Machine Intelligence},
  volume={36},
  number={4},
  pages={744--755},
  year={2014},
  doi={10.1109/TPAMI.2013.240}
}

@article{friedman2001greedy,
  title={Greedy Function Approximation: A Gradient Boosting Machine},
  author={Friedman, Jerome H.},
  journal={Annals of Statistics},
  volume={29},
  number={5},
  pages={1189--1232},
  year={2001},
  doi={10.1214/aos/1013203451}
}

@inproceedings{bourtoule2021machine,
  title={Machine Unlearning},
  author={Bourtoule, Lucas and Chandrasekaran, Varun and Choquette-Choo, Christopher A and Jia, Hengrui and Travers, Adelin and Zhang, Baoyuan and Lie, David and Papernot, Nicolas},
  booktitle={IEEE Symposium on Security and Privacy (SP)},
  pages={141--159},
  year={2021},
  doi={10.1109/SP40001.2021.00019}
}

@inproceedings{golatkar2020eternal,
  title={Eternal Sunshine of the Spotless Net: Selective Forgetting in Deep Networks},
  author={Golatkar, Aditya and Achille, Alessandro and Soatto, Stefano},
  booktitle={IEEE/CVF Conference on Computer Vision and Pattern Recognition (CVPR)},
  pages={9301--9309},
  year={2020},
  doi={10.1109/CVPR42600.2020.00932}
}

@article{johnson2019billion,
  title={Billion-scale similarity search with {GPU}s},
  author={Johnson, Jeff and Douze, Matthijs and Jégou, Hervé},
  journal={IEEE Transactions on Big Data},
  volume={7},
  number={3},
  pages={535--547},
  year={2021},
  doi={10.1109/TBDATA.2019.2921572}
}

@inproceedings{carlini2022membership,
  title={Membership Inference Attacks From First Principles},
  author={Carlini, Nicholas and Chien, Steve and Nasr, Milad and Song, Shuang and Terzis, Andreas and Tramer, Florian},
  booktitle={IEEE Symposium on Security and Privacy},
  year={2022}
}

@article{chen2010approximate,
  title={Approximate Nearest Neighbor Search by Residual Vector Quantization},
  author={Chen, Yongjian and Guan, Tao and Wang, Cheng},
  journal={Sensors},
  volume={10},
  number={12},
  pages={11259--11273},
  year={2010},
  doi={10.3390/s101211259}
}

@inproceedings{ginart2019making,
  title={Making {AI} Forget You: Data Deletion in Machine Learning},
  author={Ginart, Antonio and Guan, Melody Y. and Valiant, Gregory and Zou, James},
  booktitle={Advances in Neural Information Processing Systems (NeurIPS)},
  volume={32},
  pages={3513--3526},
  year={2019},
  eprint={1907.05012},
  archiveprefix={arXiv}
}

@article{nguyen2022survey,
  title={A Survey of Machine Unlearning},
  author={Nguyen, Thanh Tam and Huynh, Thanh Trung and Ren, Zhao and Nguyen, Phi Le and Liew, Alan Wee-Chung and Yin, Hongzhi and Nguyen, Quoc Viet Hung},
  journal={ACM Transactions on Intelligent Systems and Technology},
  volume={16},
  number={5},
  pages={108:1--108:46},
  year={2025},
  doi={10.1145/3749987}
}

@inproceedings{guo2020certified,
  title={Certified Data Removal from Machine Learning Models},
  author={Guo, Chuan and Goldstein, Tom and Hannun, Awni Y. and van der Maaten, Laurens},
  booktitle={International Conference on Machine Learning (ICML)},
  pages={3832--3842},
  year={2020},
  organization={PMLR}
}

@inproceedings{koh2017understanding,
  title={Understanding Black-box Predictions via Influence Functions},
  author={Koh, Pang Wei and Liang, Percy},
  booktitle={ICML},
  year={2017}
}

@inproceedings{cai2024slug,
  title={Targeted Unlearning with Single Layer Unlearning Gradient},
  author={Cai, Zikui and Tan, Yaoteng and Asif, M. Salman},
  booktitle={International Conference on Machine Learning (ICML)},
  year={2025},
  note={arXiv:2407.11867}
}

@article{chen2025provable,
  title={Provable Unlearning with Gradient Ascent on Two-Layer {ReLU} Neural Networks},
  author={Melamed, Odelia and Yehudai, Gilad and Vardi, Gal},
  journal={arXiv preprint arXiv:2510.14844},
  year={2025}
}

@article{mishra2026quail,
  title={{QUAIL}: Quantization Aware Unlearning for Mitigating Misinformation in {LLM}s},
  author={Mishra, Himanshu and Mehreen, Kanwal},
  journal={arXiv preprint arXiv:2601.15538},
  year={2026}
}

@article{lunar2025,
  title={{LLM} Unlearning via Neural Activation Redirection},
  author={Shen, William F. and Qiu, Xinchi and Kurmanji, Meghdad and Iacob, Alex and Sani, Lorenzo and Chen, Yihong and Cancedda, Nicola and Lane, Nicholas D.},
  journal={arXiv preprint arXiv:2502.07218},
  year={2025}
}

@article{nguyen2026oeu,
  title={Forget by Uncertainty: Orthogonal Entropy Unlearning for Quantized Neural Networks},
  author={Zhang, Tian and Tong, Yujia and Dong, Junhao and Xu, Ke and Wang, Yuze and Yuan, Jingling},
  journal={arXiv preprint arXiv:2602.00567},
  year={2026}
}

@inproceedings{muresanu2025fast,
  title={Fast Exact Unlearning for In-Context Learning Data for {LLM}s},
  author={Muresanu, Andrei Ioan and Thudi, Anvith and Zhang, Michael R. and Papernot, Nicolas},
  booktitle={ICML},
  year={2025},
  note={arXiv:2402.00751}
}

@article{cheng2024mubench,
  title={{MU-Bench}: A Multitask Multimodal Benchmark for Machine Unlearning},
  author={Cheng, Jiali and Amiri, Hadi},
  journal={arXiv preprint arXiv:2406.14796},
  year={2024}
}

@article{mantelero2013eu,
  title={The eu proposal for a general data protection regulation and the roots of the 'right to be forgotten'},
  author={Mantelero, Alessandro},
  journal={Computer Law \& Security Review},
  volume={29},
  number={3},
  pages={229--235},
  year={2013},
  publisher={Elsevier}
}

@misc{ccpa_unlearning,
  author       = {{California Department of Justice}},
  title        = {California Consumer Privacy Act ({CCPA})},
  year         = {2024},
  url          = {https://oag.ca.gov/privacy/ccpa},
  note         = {Accessed: 2024-05-22},
  organization = {Office of the Attorney General}
}

@article{schoepf2024potion,
  title={Potion: Towards Poison Unlearning},
  author={Schoepf, Stefan and Foster, Jack and Brintrup, Alexandra},
  journal={Journal of Data-centric Machine Learning Research},
  year={2024},
  eprint={2406.09173},
  archiveprefix={arXiv}
}

@inproceedings{liu2025mitigating,
  title={Mitigating biases in language models via bias unlearning},
  author={Liu, Dianqing and Liu, Yi and Jin, Guoqing and Mao, Zhendong},
  booktitle={Proceedings of the 2025 Conference on Empirical Methods in Natural Language Processing},
  pages={4160--4178},
  year={2025}
}

@inproceedings{bun2016concentrated,
  title={Concentrated Differential Privacy: Simplifications, Extensions, and Lower Bounds},
  author={Bun, Mark and Steinke, Thomas},
  booktitle={Theory of Cryptography},
  pages={635--658},
  year={2016},
  doi={10.1007/978-3-662-53641-4_24},
  url={https://arxiv.org/abs/1605.02065}
}

\appendix
\section{Proofs}
\label{app:proofs}

\subsection{Proof of Proposition~\ref{prop:exact_unlearning}
  (Exact Label Unlearning)}
\label{app:proof_unlearning}

The proof appears in the main text (\S\ref{sec:guarantees}).
We expand Step~2 here with explicit subtraction formulas.

Let $(x_j, y_j) \in \mathcal{D}^{\bullet}$.
\textsc{RQA.Unlearn} subtracts point~$j$'s contributions:
\begin{align}
  \mathbf{T}'_{m,c} &= \mathbf{T}_{m,c} - y^r_j\,\mathbf{1}[c_{j,m}=c],
  &\quad n'_{m,c} &= n_{m,c} - \mathbf{1}[c_{j,m}=c], \nonumber \\
  N'_{k,m}[c',c] &= N_{k,m}[c',c] - \mathbf{1}[c_{j,k}=c',\, c_{j,m}=c].
\end{align}
These are exactly the values obtained by running
\textsc{RQA.Build} on $\mathcal{D} \setminus \{j\}$.

\subsection{Materialization Lemma}
\label{app:materialization}

The materialization recurrence (Equation~\ref{eq:materialize})
and its derivation appear in the proof of
Proposition~\ref{prop:exact_unlearning} (\S\ref{sec:guarantees}).

\paragraph{Complexity.}
Each $(k,m)$ pair requires a matrix--matrix product $N_{k,m}^\top
\boldsymbol{\mu}_k$ of dimensions $b \times b$ times $b \times k$, costing
$O(b^2 k)$. Summing over all $\binom{\ell}{2}$ pairs gives
$O(\ell^2 b^2 k)$ total. The computation has $\ell$ sequential rounds
(level~$m$ depends on $\boldsymbol{\mu}_{1}, \ldots,
\boldsymbol{\mu}_{m-1}$); within each round, the $m{-}1$ matmuls are
independent and parallelizable.

\paragraph{Storage.}
The stored state is $\binom{\ell}{2} b^2$ integers (co-occurrence
counts) plus $\ell b k$ floats (raw label sums) plus $\ell b$ integers (cell
counts). The materialized cache adds $\ell b k$ floats (residual
accumulators for prediction). At the operating points used in our
experiments ($b{\leq}256$, $\ell{\leq}8$): Adult~1\,MB,
Covertype~3.5\,MB, AG~News~5.1\,MB, CelebA~20\,MB,
ImageNet~51\,MB. Co-occurrence counts dominate for $k \ll b$;
label sums dominate for large~$k$ (ImageNet: 31\,MB of 51\,MB).
This storage trades static memory for eliminating $O(n)$ retraining.

\paragraph{Batch unlearning.}
Deleting $B$ points requires $O(B\ell^2)$ count decrements followed by a
single $O(\ell^2 b^2 k)$ re-materialization. The per-deletion amortized
cost is $O(\ell^2 + \ell^2 b^2 k / B)$.

\subsection{Proof of Proposition~\ref{prop:variance_decay}
  (Variance Decay)}
\label{app:proof_variance}

\begin{proof}
By the law of total variance:
\begin{align}
  \mathrm{Var}(r_{m-1})
  &= \mathrm{Var}\bigl(\mathbb{E}[r_{m-1} \mid c_m]\bigr)
   + \mathbb{E}\bigl[\mathrm{Var}(r_{m-1} \mid c_m)\bigr].
\end{align}
Since $r_m = r_{m-1} - \mathbb{E}[r_{m-1} \mid c_m]$ and the conditional
expectation is the minimum-variance predictor,
$\mathbb{E}[r_m \mid c_m] = 0$. Therefore:
\begin{align}
  \mathrm{Var}(r_m)
  &= \mathbb{E}\bigl[\mathrm{Var}(r_{m-1} \mid c_m)\bigr] \nonumber \\
  &= \mathrm{Var}(r_{m-1})
     - \mathrm{Var}\bigl(\mathbb{E}[r_{m-1} \mid c_m]\bigr) \nonumber \\
  &= \mathrm{Var}(r_{m-1})(1 - \eta_m).
\end{align}
Unrolling the recursion from $m = 1$ to $m = \ell$ with $r_0 = y$:
\begin{align}
  \mathrm{Var}(r_\ell)
  = \mathrm{Var}(y) \prod_{j=1}^{\ell}(1 - \eta_j).
\end{align}
\end{proof}

\begin{remark}
Because the materialized views $\mathbf{S}_{m,j}$ are algebraically
equivalent to sequentially accumulated residuals
(Equation~\ref{eq:materialize}), the variance decay properties of the
cascade remain identical under the joint-count formulation.
\end{remark}

\subsection{Label Differential Privacy Analysis}
\label{app:proof_dp}

\noindent\textbf{Scope.}
This bound applies to \emph{label} DP: it bounds the effect of changing
a single label on the published cell means. The codebook is trained via
$k$-means on feature vectors without access to labels, so it is
label-blind. This mechanism does \emph{not} provide instance DP for the
data vectors themselves.

\begin{proof}[Proof of Proposition~\ref{prop:label_dp}]
Under the joint-count formulation, the materialized accumulator at
level~$m$ depends on all earlier cell means:
$S_{m,j} = T_{m,j} - \sum_{k<m}\sum_{j'} N_{k,m}[j',j]\,\mu_{k,j'}$.
When a single label $y_i$ changes by $\Delta y$, define the perturbation
vector $\mathbf{x}_m \in \mathbb{R}^N$ where $(\mathbf{x}_m)_i$ is the
change in point~$i$'s level-$m$ residual.
Each centering step in \textsc{RQA.Materialize} is an orthogonal
projection (subtract the cell mean from each entry), so
$\|\mathbf{x}_m\|_2 \leq \|\mathbf{x}_0\|_2 = \|\Delta y\|$.

Define the per-level $L_2$ sensitivity
$\|\boldsymbol{\delta}_m\|_2^2 = \sum_j \delta_{m,j}^2$.
By Cauchy--Schwarz on each cell,
$\delta_{m,j}^2 \leq \|(\mathbf{x}_{m-1})_{[\text{cell}\,j]}\|_2^2
/ n_{m,j}$.
Summing over cells at level~$m$:
\begin{align}
  \|\boldsymbol{\delta}_m\|_2^2
  \leq \frac{1}{n_{\min}}
  \sum_j \|(\mathbf{x}_{m-1})_{[\text{cell}\,j]}\|_2^2
  = \frac{\|\mathbf{x}_{m-1}\|_2^2}{n_{\min}}.
\end{align}
Under zero-concentrated DP~\cite{bun2016concentrated}, adding
$\mathcal{N}(0,\sigma^2)$ noise to each cell mean at level~$m$
provides $\rho_m$-zCDP with
$\rho_m = \|\boldsymbol{\delta}_m\|_2^2 / (2\sigma^2)$.
Since orthogonal projections are contractive,
$\|\mathbf{x}_{m-1}\|_2^2 - \|\mathbf{x}_m\|_2^2 \geq 0$ and
\begin{align}
  \sum_{m=1}^{\ell} \rho_m
  \leq \frac{1}{2\,n_{\min}\,\sigma^2}
  \sum_{m=1}^{\ell} \|\mathbf{x}_{m-1}\|_2^2
  \leq \frac{\ell\,\|\Delta y\|^2}{2\,n_{\min}\,\sigma^2}.
\end{align}
The key observation is that the $L_2$ energies telescope:
$\sum_m \|\mathbf{x}_{m-1}\|_2^2 \leq \ell\,\|\mathbf{x}_0\|_2^2
= \ell\,\|\Delta y\|^2$.
In fact, since $\|\mathbf{x}_m\|_2 \leq \|\mathbf{x}_{m-1}\|_2$,
we can tighten this to
$\sum_m \|\mathbf{x}_{m-1}\|_2^2 \leq \ell\,\|\Delta y\|^2$,
giving $\rho_{\mathrm{total}} \leq
\ell\,\|\Delta y\|^2 / (2\,n_{\min}\,\sigma^2)$.
Converting to $(\varepsilon,\delta)$-DP via
$\varepsilon = \rho + \sqrt{2\rho\ln(1/\delta)}$
yields the stated bound.
\end{proof}

\begin{remark}
When the residual energy decays rapidly (empirically $\eta_m \geq 0.3$),
the sum $\sum_m \|\mathbf{x}_{m-1}\|_2^2$ is dominated by the first
few levels and the effective $\ell$ factor is much smaller than the
nominal depth.
\end{remark}

Adversarial simulations provide strong evidence that the true bound is
even tighter. We tested all code assignments exhaustively for
$N \leq 6$, and ran random ($50{,}000$ trials) and greedy adversarial
search ($200$ restarts) for $N \leq 500$, $b \leq 16$, $\ell \leq 12$
(millions of configurations total). We measured the per-level ratio
$|\delta_{m,j}| / (\|\Delta y\| / n_{\min})$ at each level
independently.

\emph{Depth-independent sensitivity.} Across all
configurations, the maximum per-level ratio remains between $0.95$ and
$1.25$ at every level from~$1$ to~$12$, with no upward trend.
Table~\ref{tab:dp_simulation} summarizes the per-level maxima for
representative configurations. This suggests a per-level bound of the
form $|\delta_{m,j}| \leq C\,\|\Delta y\| / n_{m,j}$ with $C \approx
1.25$, independent of depth. If this bound holds, the total privacy
cost under the Gaussian mechanism would be
$\rho_{\mathrm{total}} \leq C^2\,\ell\,\|\Delta y\|^2 /
(2\,n_{\min}^2\,\sigma^2)$,
which is linear in depth and inversely proportional to
$n_{\min}^2$.

\begin{table}[H]
\centering
\caption{Maximum per-level sensitivity ratio $|\delta_{m,j}| /
(\|\Delta y\| / n_{\min})$ across $30{,}000$ random code assignments
per configuration. The ratio stays flat across levels.}
\label{tab:dp_simulation}
\begin{small}
\begin{tabular}{rrrcccccc}
\toprule
$n$ & $b$ & $\ell$ & L1 & L2 & L4 & L6 & L8 & L10 \\
\midrule
50  & 4  & 6  & 1.00 & 0.95 & 1.12 & 1.20 & --- & --- \\
100 & 5  & 8  & 1.00 & 0.97 & 1.09 & 1.12 & 1.16 & --- \\
200 & 8  & 10 & 1.00 & 0.97 & 1.08 & 1.09 & 1.10 & 1.10 \\
500 & 16 & 12 & 1.00 & 0.98 & 1.01 & 1.00 & 1.02 & 0.96 \\
\bottomrule
\end{tabular}
\end{small}
\end{table}

Figure~\ref{fig:dp_sensitivity} visualizes the per-level ratio across
configurations: the empirical sensitivity stays flat near~$1$, far
below both the OGMD bound ($O(m)$, dotted) and the Cauchy--Schwarz
ceiling ($O(\sqrt{n_{\min}})$, dashed). Proving the $O(\ell/n_{\min})$
bound rigorously (or constructing a counterexample) remains open.

\begin{figure}[H]
\centering
\includegraphics[width=\linewidth]{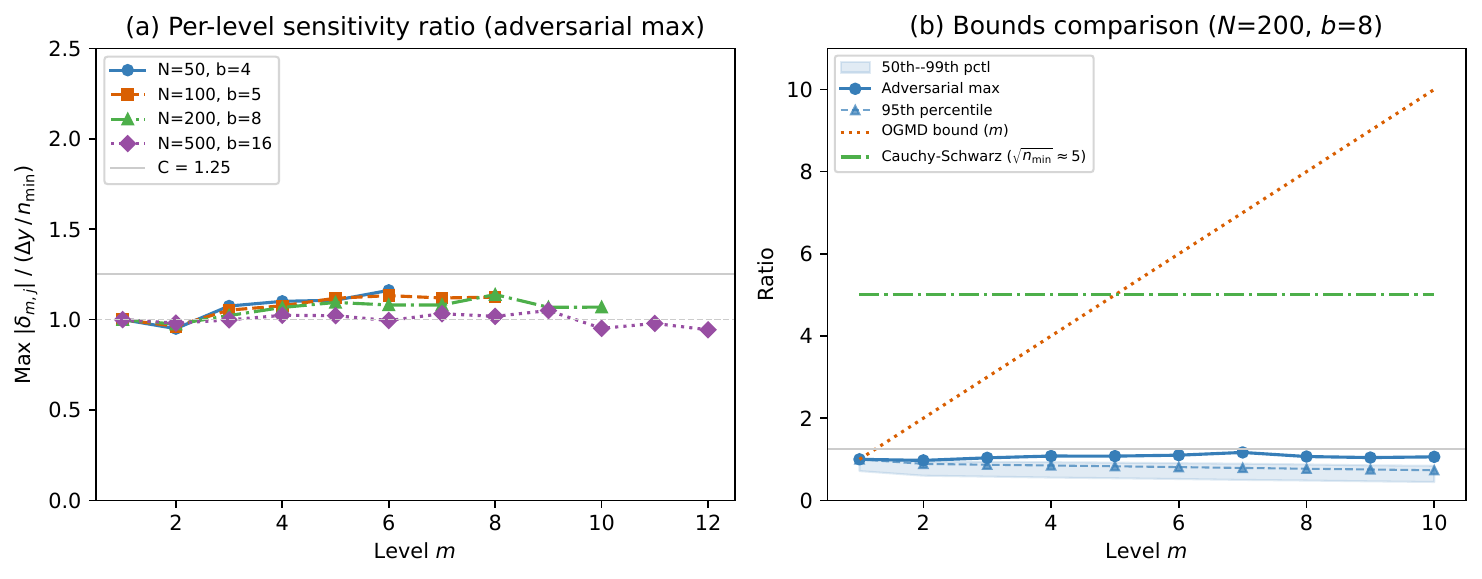}
\caption{Per-level sensitivity ratio
$|\delta_{m,j}|/(\|\Delta y\|/n_{\min})$. \textbf{(a)}~Adversarial
maximum across $20{,}000$ random code assignments for four
configurations. The ratio stays flat near~$1$ at all levels.
\textbf{(b)}~Comparison with theoretical bounds for $N{=}200$,
$b{=}8$: the empirical sensitivity (solid) is far below both the
na\"ive recurrence bound (dotted, grows linearly) and the Cauchy--Schwarz ceiling
(dashed, $\sqrt{n_{\min}} \approx 5$).}
\label{fig:dp_sensitivity}
\end{figure}

Empirically, membership inference (LiRA~\cite{carlini2022membership})
achieves AUC~$\approx 0.52$ across all noise levels, consistent with
the hypothesis that the cascade sensitivity is far below the
$2^{\ell-1}$ bound.

\subsection{Excess Risk Bound}
\label{app:excess_risk}

\begin{proposition}[MSE Decomposition]
\label{prop:excess_risk}
Let $\eta_m \geq \gamma > 0$ for all levels, and assume balanced clusters
($n_{m,j} \approx n/b$). The expected MSE of the finite-sample QSS estimator
$\hat{y}_\ell$ decomposes as:
\begin{align}
  \mathbb{E}[(\hat{y}_\ell - y)^2]
  \;\leq\; \underbrace{\mathrm{Var}(y)\prod_{m=1}^{\ell}(1-\eta_m)}_{\text{approximation}}
  \;+\; \underbrace{\frac{b\,\mathrm{Var}(y)(1-\gamma)}{n\gamma}}_{\text{estimation}}.
\end{align}
Under a Lipschitz condition on the target function, the
approximation term is $O(L^2 R^2 \rho^{2\ell})$.
\end{proposition}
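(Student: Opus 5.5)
We split the expected squared error into an approximation part and an estimation part. Write $r_m^\ast$ for the population residual cascade and $\hat r_m$ for the finite-sample cascade. The population cascade uses true conditional means $\mu^\ast_{m,j}=\mathbb{E}[r^\ast_{m-1}\mid c_m=j]$. The finite-sample cascade uses the materialized means $\boldsymbol{\mu}_{m,j}=\mathbf{S}_{m,j}/n_{m,j}$ from Equation~\ref{eq:materialize}. By the remark after Proposition~\ref{prop:variance_decay}, the materialized means coincide with sequentially fitted residual means, so we may reason about the sequential cascade directly.

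The approximation term comes straight from Proposition~\ref{prop:variance_decay}: $\mathbb{E}[(r^\ast_\ell)^2]=\mathrm{Var}(y)\prod_m(1-\eta_m)$.

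For the estimation term, we decompose $\hat y_\ell-y=-r^\ast_\ell+\sum_{m=1}^{\ell}(\boldsymbol{\mu}_{m,c_m}-\mu^\ast_{m,c_m})$ and treat the mean-estimation errors level by level.

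\emph{Cross term.} It vanishes in expectation because $\mathbb{E}[r^\ast_\ell\mid c_1,\dots,c_\ell]$ is orthogonal to functions of $c_m$ by construction, at least for the population cascade. We handle this by conditioning on the codes of the query point.

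\emph{Per-level error.} At level $m$ each cell holds $\approx n/b$ points, so the sample-mean error has conditional variance $\mathrm{Var}(r^\ast_{m-1}\mid c_m=j)/n_{m,j}$. Averaging over cells with balanced sizes gives roughly $b\,\mathbb{E}[\mathrm{Var}(r^\ast_{m-1}\mid c_m)]/n=b\,\mathrm{Var}(r^\ast_m)/n$.

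\emph{Summing over levels.} Proposition~\ref{prop:variance_decay} with $\eta_m\ge\gamma$ gives $\mathrm{Var}(r^\ast_m)\le\mathrm{Var}(y)(1-\gamma)^m$. The geometric sum is then
\[
\frac{b\,\mathrm{Var}(y)}{n}\sum_{m\ge1}(1-\gamma)^m=\frac{b\,\mathrm{Var}(y)(1-\gamma)}{n\gamma},
\]
which is exactly the stated estimation term, independent of $\ell$.

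The main obstacle is the interaction between levels. The fitted $\hat r_{m-1}$ already carries estimation error from earlier levels, so the level-$m$ means are fitted to a perturbed target. The level errors are also correlated, so the squared norm of their sum is not simply the sum of per-level variances. My first attempt is a sample-splitting or leave-one-out idealization that makes the per-level errors approximately orthogonal martingale increments with respect to the filtration generated by the levels. The propagated error can then be absorbed into the next level's residual variance via the same contraction used in Proposition~\ref{prop:label_dp}: centering is an orthogonal projection, which can only shrink it. If exact orthogonality fails, I will settle for an inequality with a constant factor and note that the "$\approx$" in the balanced-cluster hypothesis already licenses first-order reasoning.

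For the Lipschitz claim, we assume $|f(u)-f(v)|\le L\|u-v\|$ on embeddings of radius $R$. We also assume the RQ quantization radius contracts by a factor $\rho$ per level, so the level-$\ell$ cell diameter is $O(R\rho^\ell)$. Within a cell the target then varies by at most $LR\rho^\ell$, and because the conditional mean is the best constant on each cell, the approximation error is $O(L^2R^2\rho^{2\ell})$. A caveat for this step: residual cells are not nested Voronoi regions in the original space, so I would bound the diameter of the full code tuple's preimage by the sum of the remaining centroid norms, which is geometric in $\rho$.
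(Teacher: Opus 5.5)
Your estimation-term computation is the paper's own. The paper decomposes through the oracle cascade $\bar y_\ell=\sum_m\mathbb{E}[r_{m-1}\mid c_m]$, takes $\mathrm{Var}(r_\ell)$ from Proposition~\ref{prop:variance_decay}, charges level $m$ the variance $\frac{b}{n}(1-\eta_m)\mathrm{Var}(r_{m-1})=\frac{b}{n}\mathrm{Var}(r_m)$, and sums the geometric series. It simply adds per-level variances and gives no argument for cross terms, propagated error, or the Lipschitz clause, so those parts of your proposal are entirely yours.

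The Lipschitz step is a genuine gap. Small variation of $y$ on each joint cell $(c_1,\dots,c_\ell)$ controls $\mathbb{E}[\mathrm{Var}(y\mid c_1,\dots,c_\ell)]$, the error of the best predictor that is piecewise constant on the $b^\ell$ joint cells. But the cascade is a one-pass \emph{additive} fit, and $\mathbb{E}[(r^\ast_\ell)^2]\geq\mathbb{E}[\mathrm{Var}(y\mid c_1,\dots,c_\ell)]$, so you have proved a lower bound, not an upper bound. The gap can be total. Take $\varphi$ uniform on $[-1,1]^2$ with the dyadic RQ: level~1 splits on the sign of $u_1$, level~2 on the sign of $u_2$, and so on, which is a Lloyd fixed point at every level. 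Take the $\sqrt{2}$-Lipschitz target $u_1u_2$. Every code is a function of one coordinate, so $\mathbb{E}[r^\ast_{m-1}\mid c_m]=0$ and $r^\ast_\ell=u_1u_2$ for all $\ell$. Yet the joint cells shrink geometrically, and that shrinkage plus the Lipschitz bound is all your argument uses. The step does work for linear targets $y=w^\top\varphi$ when every codebook satisfies the centroid condition, since then $r^\ast_\ell=w^\top(\varphi-\hat\varphi_\ell)$ exactly. Some such hypothesis has to be added; your nested-Voronoi caveat does not address this issue.

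Your cross-term justification is also false. $\mathbb{E}[r^\ast_\ell\mid c_m]=0$ holds only for $m=\ell$: for instance $\mathbb{E}[r^\ast_{m+1}\mid c_m]=-\mathbb{E}\bigl[\mathbb{E}[r^\ast_m\mid c_{m+1}]\mid c_m\bigr]$, which is nonzero whenever codes at different levels are dependent, as they are in RQ. The term vanishes for a different reason. Given the query's codes $c$, the training-sample error $e$ is independent of the query, so the cross term equals $\mathbb{E}\bigl[\mathbb{E}[r^\ast_\ell\mid c]\,\mathbb{E}[e\mid c]\bigr]$. What you need is (approximate) unbiasedness of the cell means, not orthogonality.

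On the interaction between levels, a literal sample split is the wrong tool. Each level would get only $n/\ell$ points, multiplying your estimation term by $\ell$ and destroying the $\ell$-independence you emphasize, and a constant-factor fallback is not the stated inequality. You do not need a split for the contraction, because it is an identity. Let $D_m$ be the fitted-minus-oracle predictor after $m$ levels, and $\epsilon_m$ the level-$m$ error relative to the conditional mean of the \emph{fitted} residual. Then $D_m=(I-P_m)D_{m-1}+\epsilon_m$, with $P_m=\mathbb{E}[\,\cdot\mid c_m]$ and $\epsilon_m$ a function of $c_m$. Hence $\|D_\ell\|^2\leq\sum_m\|\epsilon_m\|^2$ for every training sample. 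What remains heuristic is bounding $\mathbb{E}\|\epsilon_m\|^2$ by $\frac{b}{n}\mathrm{Var}(r^\ast_m)$ when the fitted residual reuses the same points. That is exactly the first-order step the paper takes without comment.
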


\begin{proof}
Decompose the error through the oracle (infinite-data) predictor
$\bar{y}_\ell = \sum_m \mathbb{E}[r_{m-1} \mid c_m]$.
The approximation term is $\mathrm{Var}(r_\ell)$
(Proposition~\ref{prop:variance_decay}).

For the estimation term, each level's estimator $\hat{g}_m$ estimates the
cell mean with $\tilde{n} \approx n/b$ samples:
\begin{align}
  \mathrm{Var}(\hat{g}_m)
  &= \frac{1}{\tilde{n}}\,\mathbb{E}\bigl[\mathrm{Var}(r_{m-1} \mid c_m)\bigr]
  \nonumber \\
  &= \frac{b}{n}(1-\eta_m)\,\mathrm{Var}(r_{m-1}).
\end{align}
Summing across levels and expanding $\mathrm{Var}(r_{m-1})$:
\begin{align}
  \sum_{m=1}^{\ell} \mathrm{Var}(\hat{g}_m)
  &= \frac{b}{n} \sum_{m=1}^{\ell}
     (1{-}\eta_m)\,\mathrm{Var}(y)\prod_{j=1}^{m-1}(1{-}\eta_j)
     \nonumber \\
  &= \frac{b\,\mathrm{Var}(y)}{n}
     \sum_{m=1}^{\ell}\prod_{j=1}^{m}(1{-}\eta_j)
     \nonumber \\
  &\leq \frac{b\,\mathrm{Var}(y)}{n}
     \sum_{m=1}^{\ell}(1{-}\gamma)^m \nonumber \\
  &\leq \frac{b\,\mathrm{Var}(y)}{n}
     \cdot \frac{1{-}\gamma}{\gamma}.
\end{align}
The estimation term scales as $b/n$, \emph{not} $b^\ell/n$: the
label-residual structure uses each level independently ($b$ cells per level,
$n/b$ samples each), avoiding the curse of dimensionality that would arise
from joint $b^\ell$-cell estimation.
\end{proof}

\noindent\textbf{Interpretation and codebook collapse.}
The bound reveals two collapse modes.
\emph{Width collapse:} the optimal $b^*$ balances approximation (decreasing
in $b$) against estimation (increasing as $b/n$). When $b \to n$, cells
receive too few points and the per-cell means become noisy; at the extreme,
empty cells ($n_{m,j} = 0$) require smoothing but cannot recover signal.
\emph{Depth collapse:} if preconditioning concentrates between-class
variance into the first few dimensions, $\eta_m \to 0$ for later levels
and those levels are wasted, adding estimation error
without reducing approximation error. The practical rule is $b \ll n$ and
increase $\ell$ only while $\eta_m$ remains appreciable.


\subsection{Complexity of the QSS Method (Table~\ref{tab:methods})}
\label{app:qss_complexity}

\begin{proposition}[QSS Training Cost]
\label{prop:train_cost}
Training the full QSS method costs $t_g(n^{\circ}) + O(n\ell b d)$.
\end{proposition}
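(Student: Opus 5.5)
The plan is to decompose training into its sequential stages and bound each one separately, then add them. The stages, in the order \textsc{RQA.Build} executes them, are:
\begin{enumerate}
\item fitting the base model $g$ on $\mathcal{D}^{\circ}$;
\item training the RQ codebook;
\item encoding all $n$ points through the frozen codebook;
\item computing residual targets $y^r_i = y_i - g(x_i)$ and filling the accumulators $(\mathbf{T}, n, N)$;
\item running \textsc{RQA.Materialize} once.
\end{enumerate}

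The first stage costs $t_g(n^{\circ})$ by definition of $t_g$. The feature map $\varphi$ is a frozen encoder, so I would count its application as part of the per-point input cost, or absorb it into $t_g$.

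For the codebook, I would treat Faiss RQ training as a constant number of $k$-means (Lloyd) iterations per level. Each iteration costs $O(n' b d)$ on $n'$ training points, so the total is $O(n'\ell b d)$. In QSS-E, $n' = n^{\circ} \le n$. In QSS-L, $n' = n$. In both cases this is within $O(n\ell b d)$.

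Encoding is the dominant $n$-dependent step. For each point and each of the $\ell$ levels, the greedy RQ encoder compares the current $d$-dimensional residual against $b$ centroids at cost $O(bd)$. It then subtracts the chosen centroid at cost $O(d)$. This gives $O(n\ell b d)$ in total.

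Accumulation adds per point:
\begin{itemize}
\item $O(\ell k)$ for the target sums $\mathbf{T}$;
\item $O(\ell)$ for the counts;
\item $O(\ell^2)$ for the co-occurrence entries $N_{k,m}$.
\end{itemize}
Evaluating $g(x_i)$ on all $n$ points for the residual targets costs $n\,t_g$, where $t_g$ is the per-query prediction cost from Table~\ref{tab:methods}.

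Finally, materialization costs $O(\ell^2 b^2 k)$ by the complexity paragraph of the Materialization Lemma (Appendix~\ref{app:materialization}). This term is independent of $n$.

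The step I expect to be the main obstacle is absorbing the lower-order terms $O(n\ell^2)$, $O(n\ell k)$, $n\,t_g$ and $O(\ell^2 b^2 k)$ into the stated bound. I would use the operating regime:
\begin{itemize}
\item $\ell \le b$ and $k \le d$ give $n\ell^2 \le n\ell b$ and $n\ell k \le n\ell d$, both dominated by $n\ell b d$.
\item $\ell \le d$ and $bk \le nd$, which hold whenever $n \ge b$, give $\ell^2 b^2 k \le n\ell b d$.
\item Since $g$ is a linear probe or one-hidden-layer MLP on $d$-dimensional inputs, I would take $t_g = O(dk)$ or $O(dh)$ with hidden width $h = O(\ell b)$, so that $n\,t_g = O(n\ell b d)$.
\end{itemize}
Failing that last assumption on $g$, I would fold the inference pass into the term $t_g(n^{\circ})$ by stating it as a mild assumption. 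I would also note that Faiss iteration counts are treated as constants, which is the standard convention in this kind of analysis.
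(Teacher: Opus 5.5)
Your decomposition is the paper's. Both charge $t_g(n^{\circ})$ for the base model, $O(bd)$ per level for greedy nearest-centroid encoding (the dominant $O(n\ell bd)$), an $O(n\ell^2)$ scatter-add, and an $n$-independent $O(\ell^2 b^2 k)$ materialization. The paper then simply asserts dominance "since $bd \gg \ell$". It leaves out three costs your version accounts for:
\begin{itemize}
\item codebook training;
\item the $n$ evaluations of $g$ needed to form $y^r_i$;
\item the $O(n\ell k)$ cost of the vector-valued target sums.
\end{itemize}
Your constant-iteration Lloyd argument is what actually covers QSS-L and QSS-DP, where the codebook is fit on all $n$ points.

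One step of your absorption is wrong as written. The inequality $\ell^2 b^2 k \le n\ell bd$ is equivalent to $\ell bk \le nd$. Your conditions $\ell \le d$ and $bk \le nd$ only give $\ell bk \le \ell\, nd$, which is a factor $\ell$ short. You need $\ell bk \le nd$ directly. It follows, for example, from $k \le d$ together with $n \ge \ell b$ (at least as many points as total centroids), or from $\ell \le d$ together with $n \ge bk$.

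The base-model inference term needs similar care:
\begin{itemize}
\item For a linear probe, $t_g = O(dk)$ gives $n\,t_g = O(n\ell bd)$ only if $k = O(\ell b)$, which you should state.
\item For the MLP, the output layer adds $O(hk)$ to $O(dh)$, so you also need $k = O(d)$.
\item Your fallback of folding the $n$ inference calls into $t_g(n^{\circ})$ does not bound anything. That term is a function of $n^{\circ}$ alone, while the inference cost grows with $n$. If you want no assumption on $g$, keep an explicit $n\,t_g$ term in the bound. This matches how the paper keeps $t_g$ separate in its prediction-cost statement; its training-cost proof drops this term silently.
\end{itemize}

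For the $O(n\ell k)$ accumulation term you do not need $k \le d$ at all; $k \le bd$ suffices. This matters because $k \le d$ can fail, for example ImageNet with $768$-dimensional CLIP features and $k = 1000$.
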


\begin{proof}
Training proceeds in two stages.
\emph{Stage~1:} train the base model~$g$ on $\mathcal{D}^{\circ}$
($|\mathcal{D}^{\circ}| = n^{\circ}$), costing $t_g(n^{\circ})$ (model-dependent).
\emph{Stage~2:} build the RQA data structure.
Encoding each of the $n$ points requires $\ell$ sequential levels;
at each level, the nearest of $b$ centroids is found by computing
$b$ inner products in $\mathbb{R}^d$, costing $O(bd)$ per level.
Total encoding: $O(n\ell b d)$.
The scatter-add pass that fills $\mathbf{T}_{m,j}$, $n_{m,j}$, and
$N_{k,m}$ costs $O(n\ell^2)$ (each point updates $\ell$ target sums
and $\binom{\ell}{2}$ co-occurrence counts).
Materialization costs $O(\ell^2 b^2 k)$
(Appendix~\ref{app:materialization}).
Since $bd \gg \ell$ in practice, the encoding term dominates and the
total is $t_g(n^{\circ}) + O(n\ell b d)$.
\end{proof}

\begin{proposition}[QSS Prediction Cost]
\label{prop:predict_cost}
Predicting with the full QSS method costs $t_g + O(\ell b d)$ per query.
\end{proposition}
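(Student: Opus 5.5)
The cost of a prediction splits into two parts: the base-model evaluation $g(x)$, which costs $t_g$ by definition, and the evaluation of $\textsc{RQA.Predict}(x)$ in Equation~\ref{eq:predict}. I would account for each of these in the same order the prediction pipeline runs, as in the proof of Proposition~\ref{prop:train_cost}. The argument is that a single query is just the $n=1$ encoding step plus a table lookup. Everything data-dependent has already been materialized.

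First, the query is encoded through the frozen RQ codebook. This takes $\ell$ sequential levels. At level $m$ we compute the current residual embedding and find the nearest of $b$ centroids, using $b$ inner products (or distances) in $\mathbb{R}^d$. That costs $O(bd)$ per level. Updating the embedding residual by subtracting the chosen centroid costs only $O(d)$. Over all levels the encoding costs $O(\ell b d)$, which is exactly the per-point term from the training-cost proof with $n=1$. The feature map $\varphi(x)=L_2(f(x))$ is treated the same way as in training: its cost is either absorbed into $t_g$ or assumed precomputed, as the paper's convention suggests.

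Second, the code path $c_1(x),\dots,c_\ell(x)$ is used to sum $\boldsymbol{\mu}_{m,c_m(x)}$ as in Equation~\ref{eq:rq_predict}. This step relies on the materialized cache $\boldsymbol{\mu}$ (Appendix~\ref{app:materialization}). That cache is recomputed only when the accumulators change, at unlearning or build time, so its $O(\ell^2 b^2 k)$ materialization cost is not charged to prediction. Given the cache, the lookup and sum cost $O(\ell k)$. Scaling by $\alpha$ and adding $g(x)$ costs $O(k)$.

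The total is $t_g + O(\ell b d + \ell k)$. The main obstacle is justifying that the $O(\ell k)$ term is absorbed into $O(\ell bd)$, since the stated bound has no separate $k$ term. I would argue this holds in the operating regime, where $k \le bd$: even for ImageNet, $k = 1000$ is well below $b d$ with $b$ up to $256$ and $d \ge 384$. If needed, I would state this as an explicit standing assumption, or write the bound as $t_g+O(\ell(bd+k))$ and note that it reduces to the claim when $k=O(bd)$.
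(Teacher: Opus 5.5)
Your proposal is correct and follows the paper's proof essentially step for step. You charge $t_g$ for the base model, $O(\ell b d)$ for encoding the query by the same per-point argument as in the training-cost proof, and $O(\ell k)$ for summing the cached cell means. The paper drops the $O(\ell k)$ term without comment, so your explicit note that this needs $k = O(bd)$, or else the bound should read $t_g + O(\ell(bd+k))$, is a useful clarification rather than a different route.
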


\begin{proof}
Prediction (Equation~\ref{eq:predict}) evaluates
$\hat{y}(x) = g(x) + \alpha \cdot \textsc{RQA.Predict}(x)$.
The base-model inference $g(x)$ costs $t_g$.
\textsc{RQA.Predict} encodes the query ($O(\ell b d)$, same
argument as above) and sums $\ell$ cached cell means ($O(\ell k)$).
The total is $t_g + O(\ell b d)$.
\end{proof}

\begin{proposition}[Expected Unlearning Cost]
\label{prop:expected_unlearn}
For a uniformly random deletion, the expected cost is
$(1 - n^{\circ}/n)\,O(\ell^2 b^2 k) + (n^{\circ}/n)\,t_{\textup{rebuild}}$.
\end{proposition}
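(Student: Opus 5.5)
The plan is to condition on whether the deleted point lies in the retained schema set or in the deletable content, and then apply linearity of expectation. Model the request as a uniformly random index $j \in \{1,\dots,n\}$, with the schema set $\mathcal{D}^{\circ}$ of size $n^{\circ}$ fixed beforehand. Then
\[
\Pr[(x_j,y_j)\in\mathcal{D}^{\circ}] = \frac{n^{\circ}}{n},
\qquad
\Pr[(x_j,y_j)\in\mathcal{D}^{\bullet}] = 1-\frac{n^{\circ}}{n}.
\]
The same conclusion holds if instead $j$ is fixed and $\mathcal{D}^{\circ}$ is a uniformly random subset of size $n^{\circ}$, since by symmetry the membership probability is again $n^{\circ}/n$. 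I would note this explicitly, because the secrecy argument in \S\ref{sec:amortized} relies on the random-schema view.

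\emph{Fast path} ($j\in\mathcal{D}^{\bullet}$). By Proposition~\ref{prop:exact_unlearning}, or Remark~\ref{rem:qssl_label} for QSS-L, no schema component changes, so \textsc{RQA.Unlearn} performs only two steps.
\begin{itemize}
\item It decrements $\ell$ target sums of dimension $k$, $\ell$ cell counts, and $\binom{\ell}{2}$ co-occurrence entries. This costs $O(\ell k + \ell^2)$, using the explicit subtraction formulas of Appendix~\ref{app:proof_unlearning}.
\item It runs one call to \textsc{RQA.Materialize}. By the complexity paragraph of Appendix~\ref{app:materialization}, this costs $O(\ell^2 b^2 k)$: $\binom{\ell}{2}$ products of a $b\times b$ matrix with a $b\times k$ matrix, followed by $O(\ell b k)$ divisions.
\end{itemize}
Since $\ell k+\ell^2 \le \ell^2 b^2 k$, the fast path costs $O(\ell^2 b^2 k)$ in total.

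\emph{Rebuild path} ($j\in\mathcal{D}^{\circ}$). The schema is invalidated, so the procedure retrains $g$, refits the codebook, re-encodes the data, and re-accumulates the statistics. By definition this costs $t_{\text{rebuild}}$, which Proposition~\ref{prop:train_cost} identifies as $t_g(n^{\circ})+O(n\ell bd)$. The small cost of first detecting membership is absorbed into this term.

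Combining the two cases by the law of total expectation gives
\[
\Bigl(1-\frac{n^{\circ}}{n}\Bigr)O(\ell^2 b^2 k)+\frac{n^{\circ}}{n}\,t_{\text{rebuild}}.
\]

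The only subtle step is the cost of the membership test that routes each request. A single flag bit per point, stored in an $O(n)$-bit array or a hash set of schema IDs, makes this test $O(1)$, so it does not affect the fast-path bound. Everything else follows directly from results already established in the paper.
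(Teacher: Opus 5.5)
Your proposal is correct and is essentially the paper's own argument. You condition on whether the deleted point lies in $\mathcal{D}^{\circ}$ (probability $n^{\circ}/n$), charge the fast path the counter decrements plus one $O(\ell^2 b^2 k)$ re-materialization, charge the other branch $t_{\textup{rebuild}}$, and combine by linearity of expectation; your random-schema symmetry remark and $O(1)$ membership test are harmless refinements. One caveat you share with the paper: \textsc{RQA.Unlearn} in Algorithm~\ref{alg:qss} begins with an \textsc{RQ.encode}$(\varphi(x))$ call costing $O(\ell b d)$ (plus evaluating $\varphi$), which ``no schema component changes'' does not remove, and this is absorbed into $O(\ell^2 b^2 k)$ only if $d = O(\ell b k)$ or the deleted point's codes are cached.
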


\begin{proof}
A deletion request targets a uniformly random point in
$\mathcal{D}$.
With probability $1 - n^{\circ}/n$, the point lies in
$\mathcal{D}^{\bullet}$: \textsc{RQA.Unlearn} decrements
$O(\ell^2)$ counters and re-materializes in $O(\ell^2 b^2 k)$
(Appendix~\ref{app:materialization}).
With probability $n^{\circ}/n$, the point lies in
$\mathcal{D}^{\circ}$: $g$ and the codebook must be retrained,
costing $t_{\textup{rebuild}}$.
The expectation follows by linearity.
\end{proof}

\subsection{Amortized Deletion and CTMC Analysis}
\label{app:amortized_proofs}

\begin{proof}[Proof of Proposition~\ref{prop:amortized}]
Each deletion hits instance~$i$'s $\mathcal{D}^{\circ}_i$
independently with probability~$\rho$.  A synchronous rebuild
occurs when all $p$~instances are invalidated.  The expected
number of deletions until all $p$ are hit follows the coupon
collector with non-uniform probabilities; for uniform~$\rho$,
it reduces to $H_p / \rho$ by a standard coupon-collector
argument.  Dividing the rebuild cost by this expected inter-rebuild
count gives the amortized term.
\end{proof}

\paragraph{CTMC steady state.}
Model the system as a continuous-time Markov chain on the number
of active instances $\{0, 1, \ldots, p\}$, with invalidation
rate~$\lambda\rho$ per active instance and repair rate~$\mu$ per
inactive instance (background rebuild).  The steady-state
probability of full outage (state~$0$) is
\begin{align}
  \pi_0 = \left(\sum_{j=0}^{p} \binom{p}{j}
    \frac{(\lambda\rho)^{p-j}\,\mu^{j}}
         {\prod_{i=1}^{p-j}(i\lambda\rho)
          \prod_{i=1}^{j}(i\mu)}\right)^{-1}
  \!\!= O\!\left(\left(\frac{\lambda\rho}{\mu}\right)^p\right).
\end{align}
For $\rho \ll 1$ and $\mu \gg \lambda\rho$ (rebuilds faster
than invalidations), full outages are exponentially rare in~$p$.
The optimal rebuild policy is analyzed in
Appendix~\ref{app:rebuild_proofs}.

\subsection{Rebuild Scheduling (Proposition~\ref{prop:rebuild})}
\label{app:rebuild_proofs}

We analyze the non-blocking rebuild model of
Proposition~\ref{prop:rebuild}.  The system maintains $p$~independent
schemas.  A deletion request independently invalidates each active
instance with probability~$\rho$ (since $\mathcal{D}^{\circ}$
membership is secret and uniformly random).  A rebuild costs
$t_{\textup{rebuild}}$ in CPU, takes wall-clock time~$T_r$, restores
all $p$~instances, and does not interrupt serving by surviving
instances.  At most one rebuild is in flight at a time.
The \emph{eager} policy starts a background rebuild whenever any
instance is invalidated, if none is already in progress.

\begin{proof}[Proof of Proposition~\ref{prop:rebuild}(i): Outage
  Bound]

When eager triggers a rebuild, exactly $p{-}1$ instances remain active
(the one just invalidated plus the in-progress rebuild).
During the rebuild window~$T_r$, the number of arriving deletions is
$D \sim \mathrm{Poisson}(\lambda_d T_r)$.  For each deletion,
each surviving instance is independently invalidated with
probability~$\rho$.  Conditional on~$D$, instance~$i$ survives all
$D$~deletions with probability $(1{-}\rho)^D$, and these events are
independent across instances.

\paragraph{Marginal survival probability.}
The unconditional survival probability of a single instance is:
\begin{align}
  \alpha
  \;=\; \mathbb{E}\bigl[(1{-}\rho)^D\bigr]
  \;=\; e^{-\lambda_d T_r\,(1-(1-\rho))}
  \;=\; e^{-\rho\lambda_d T_r}.
  \label{eq:alpha}
\end{align}

\paragraph{Joint outage probability.}
A full outage requires all $p{-}1$ surviving instances to die.
By inclusion-exclusion:
\begin{align}
  P_{\textup{out}}
  &= \sum_{k=0}^{p-1} (-1)^k \binom{p{-}1}{k}\,
     \mathbb{E}\bigl[(1{-}\rho)^{kD}\bigr] \nonumber \\
  &= \sum_{k=0}^{p-1} (-1)^k \binom{p{-}1}{k}\,
     e^{-\lambda_d T_r(1-(1-\rho)^k)}.
  \label{eq:pout_exact}
\end{align}

\paragraph{Lower bound.}
Since $f(x) = x^{p-1}$ is convex for $p \geq 2$ and
$q(D) = 1{-}(1{-}\rho)^D \in [0,1]$:
\begin{align}
  P_{\textup{out}}
  = \mathbb{E}\bigl[q(D)^{p-1}\bigr]
  \geq \bigl(\mathbb{E}[q(D)]\bigr)^{p-1}
  = (1{-}\alpha)^{p-1}.
  \label{eq:pout_lower}
\end{align}

\paragraph{Upper bound and tightness.}
The gap between~\eqref{eq:pout_exact} and~\eqref{eq:pout_lower}
arises from the positive correlation: a large~$D$ realization
simultaneously increases all instances' death probabilities.
To quantify this, note that the joint survival probability
of $k$~instances is:
\begin{align}
  \mathbb{E}[(1{-}\rho)^{kD}]
  = e^{-\lambda_d T_r(1-(1-\rho)^k)}
  = \alpha^k \cdot e^{\lambda_d T_r\,(k\rho - (1-(1-\rho)^k))}.
\end{align}
The correction exponent satisfies
$k\rho - (1{-}(1{-}\rho)^k) = \binom{k}{2}\rho^2 + O(k^3\rho^3)$.
Substituting into~\eqref{eq:pout_exact} and comparing
with the binomial expansion of $(1{-}\alpha)^{p-1}$, the
multiplicative gap is:
\begin{align}
  \frac{P_{\textup{out}}}{(1{-}\alpha)^{p-1}}
  = 1 + O\!\left(\tbinom{p{-}1}{2}\,\rho^2 \lambda_d T_r\right).
\end{align}
For $\rho = 0.5\%$, $p = 3$, $\lambda_d T_r = 10$:
the correction is $1 + O(2.5 \times 10^{-4}) \approx 1.0003$.
In the practical regime ($\rho \ll 1$), the lower
bound~\eqref{eq:pout_lower} is therefore tight, and
\begin{align}
  P_{\textup{out}} \approx (1 - e^{-\rho\lambda_d T_r})^{p-1}.
\end{align}
For $\rho{=}0.5\%$ and $\lambda_d T_r{=}10$:
$P_{\textup{out}} \approx 0.049^{p-1}$.  At $p{=}3$:
$P_{\textup{out}} \approx 0.24\%$.
\end{proof}

\begin{proof}[Proof of Proposition~\ref{prop:rebuild}(ii): Rebuild
  Overhead]

We compare the number of rebuilds performed by the eager policy
against the offline optimal (which knows the full deletion
sequence in advance).

\paragraph{Eager rebuild rate.}
Starting from $k = p$ (all instances active), the next invalidation
event has probability $1{-}(1{-}\rho)^p$ per deletion.  For
small~$\rho$, this is approximately~$p\rho$.  Under eager, every
invalidation triggers a rebuild, so the expected inter-rebuild gap
is:
\begin{align}
  N_{\textup{eager}}
  = \frac{1}{1{-}(1{-}\rho)^p}
  \approx \frac{1}{p\rho}
  \quad\text{deletions.}
  \label{eq:eager_gap}
\end{align}

\paragraph{Lower bound on any policy's rebuild rate.}
Without a rebuild, the instance count~$k$ can only decrease.
Starting from $k = p$, the expected number of deletions until all
$p$~instances are invalidated is a coupon-collector problem: at
state~$k$, the next death has probability
$1{-}(1{-}\rho)^k \approx k\rho$, giving expected waiting time
$\approx 1/(k\rho)$.  Summing:
\begin{align}
  N_{p \to 0}
  = \sum_{k=1}^{p} \frac{1}{1{-}(1{-}\rho)^k}
  \approx \frac{1}{\rho}\sum_{k=1}^{p}\frac{1}{k}
  = \frac{H_p}{\rho}.
  \label{eq:coupon}
\end{align}
Any policy, including the offline optimal, must rebuild at
least once every $H_p / \rho$ deletions on average, since
without a rebuild the system inevitably reaches $k = 0$.

\paragraph{Competitive ratio.}
The ratio of rebuild rates is:
\begin{align}
  \frac{N_{p \to 0}}{N_{\textup{eager}}}
  = \frac{H_p / \rho}{1/(p\rho)}
  = p\,H_p.
\end{align}
This bound is tight: an offline-optimal policy with instantaneous
rebuilds ($T_r = 0$) can wait until $k = 0$ and rebuild without
any outage, achieving inter-rebuild gap $H_p / \rho$.  For
$p = 3$: $p\,H_3 = 3 \cdot (1 + \tfrac{1}{2} + \tfrac{1}{3})
= 5.5$; for $p = 2$: $p\,H_2 = 3$.
\end{proof}

\paragraph{Steady-state cost rate.}
To choose among rebuild thresholds, we
generalize to the threshold-$k^*$ policy: trigger a rebuild when the
active instance count drops to~$k^*$.  The long-run average cost per
deletion follows from renewal theory (ratio of expected cost per
cycle to expected cycle length):
\begin{align}
  \mathcal{C}(k^*)
  = \frac{t_{\textup{rebuild}}
    + P_{\textup{out}}(k^*) \cdot C \cdot \lambda_p \cdot T_r}
    {N_{p \to k^*} + \lambda_d \cdot T_r},
  \label{eq:cost_rate}
\end{align}
where
$N_{p \to k^*} = \sum_{k=k^*+1}^{p} [1{-}(1{-}\rho)^k]^{-1}$
is the expected number of deletions from $k{=}p$ to $k{=}k^*{+}1$,
and $P_{\textup{out}}(k^*) \approx (1{-}e^{-\rho\lambda_d T_r})^{k^*}$
is the outage probability with $k^*$~instances surviving during the
rebuild window.

In the regime $C \gg t_{\textup{rebuild}}$ (outage is
catastrophic), the second term in the numerator dominates and
$k^* = p{-}1$ (eager) minimizes~\eqref{eq:cost_rate}.
When $C$ is moderate, the optimal threshold balances rebuild
frequency against outage risk; for small~$p$, this can be evaluated
numerically.

\subsection{RQA Pseudocode}
\label{app:rqa_pseudocode}

\begin{algorithm}[t]
\caption{The RQA data structure. RQ.encode is the frozen
Faiss encoder; the four RQA procedures operate on
target values $\mathbf{Y}^r$; in QSS, $y^r_i = y_i - g(x_i)$
(\S\ref{sec:model-pair}).}
\label{alg:qss}
\small
\textbf{Input:} Feature map $\varphi$, RQ with $b$
centroids/level, $\ell$ levels\\
\textbf{State:}\;
$\mathbf{T}_{m,j} {\in} \mathbb{R}^k,\;
n_{m,j} {\in} \mathbb{N},\;
N_{k,m} {\in} \mathbb{N}^{b \times b}$.
\;Cache\; $\mathbf{S}_{m,j} {\in} \mathbb{R}^k$.
\vspace{1mm}
\hrule
\vspace{1mm}
\begin{minipage}[t]{0.48\textwidth}
\begin{algorithmic}[1]
\Procedure{RQA.Build}{$\mathbf{X}, \mathbf{Y}^r$}
  \State $\mathbf{C} \leftarrow \text{RQ.encode}(\varphi(\mathbf{X}))$
    \Comment{$n \times \ell$}
  \For{$m = 1$ \textbf{to} $\ell$}
    \State $\mathbf{T}_{m,j} \leftarrow
      \sum_{i:\,C_{i,m}=j} y^r_i$
      \Comment{scatter-add}
    \State $n_{m,j} \leftarrow |\{i : C_{i,m} = j\}|$
  \EndFor
  \For{$k = 1$ \textbf{to} $\ell$;\; $m = k{+}1$ \textbf{to} $\ell$}
    \State $N_{k,m}[j',j] \leftarrow
      |\{i : C_{i,k}{=}j',\, C_{i,m}{=}j\}|$
  \EndFor
  \State \Call{RQA.Materialize}{}
\EndProcedure
\Statex
\Procedure{RQA.Predict}{$x$}
  \State $\mathbf{c} \leftarrow \text{RQ.encode}(\varphi(x))$
  \State \Return $\sum_{m=1}^{\ell}
    \mathbf{S}_{m, c_m} / n_{m, c_m}$
\EndProcedure
\end{algorithmic}
\end{minipage}%
\hfill\vrule\hfill%
\begin{minipage}[t]{0.48\textwidth}
\begin{algorithmic}[1]
\setcounter{ALG@line}{13}
\Procedure{RQA.Unlearn}{$x, y^r$}
  \State $\mathbf{c} \leftarrow \text{RQ.encode}(\varphi(x))$
  \For{$m = 1$ \textbf{to} $\ell$}
    \State $\mathbf{T}_{m, c_m} \mathrel{-}= y^r$;\;
      $n_{m, c_m} \mathrel{-}= 1$
    \For{$k = 1$ \textbf{to} $m{-}1$}
      \State $N_{k,m}[c_k, c_m] \mathrel{-}= 1$
    \EndFor
  \EndFor
  \State \Call{RQA.Materialize}{}
\EndProcedure
\Statex
\Procedure{RQA.Materialize}{}
  \Comment{$O(\ell^2 b^2 k)$}
  \For{$m = 1$ \textbf{to} $\ell$}
    \State $\mathbf{S}_{m,j} \leftarrow \mathbf{T}_{m,j}
      {-} \sum_{k<m} \sum_{j'}
        N_{k,m}[j'\!,j] \cdot \boldsymbol{\mu}_{k,j'}$
    \State $\boldsymbol{\mu}_{m,j} \leftarrow
      \mathbf{S}_{m,j} / n_{m,j}$
      \;($\mathbf{0}$ if $n_{m,j} {=} 0$)
  \EndFor
\EndProcedure
\end{algorithmic}
\end{minipage}
\end{algorithm}

\section{Codebook Fragility: Extended Analysis}
\label{app:fragility}

We train RQ ($b{=}1024$, $\ell{=}8$) on 162,770 CelebA CLIP embeddings, remove
100 random points (0.06\%), retrain, and re-encode. Result: 100\% of remaining
codes change (per-level flip rate 99.8--99.9\%, 3 repeats). Minimum Voronoi
margin is $\approx 0$ at every level; cross-level dilution is 1.0, confirming
cascades propagate at full strength.

\begin{figure}[H]
\centering
\includegraphics[width=\linewidth]{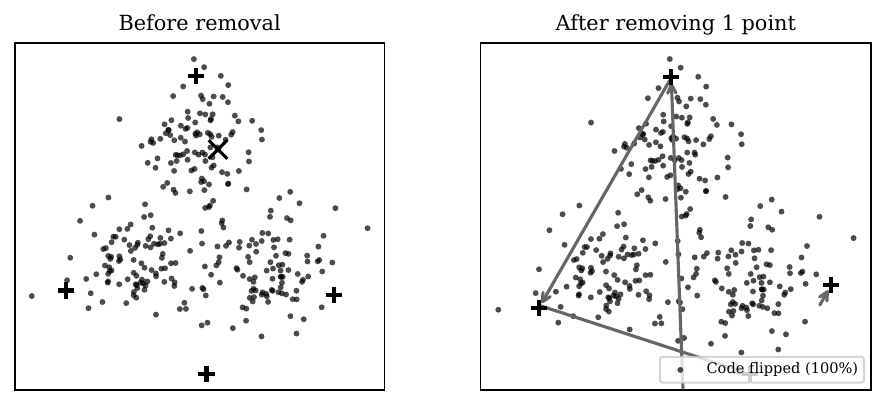}
\caption{Codebook instability (\S\ref{app:fragility})
on 2D synthetic data ($b{=}20$, 300 points). \textbf{Left:}
original $k$-means centroids ($+$) and Voronoi boundaries.
\textbf{Right:} removing one point ($\times$) and retraining
$k$-means shifts centroids (arrows), flipping 30\% of codes
(dark points). Flips occur far from the removed point,
demonstrating that the perturbation is global.
In 768-d CLIP embeddings, the flip rate is 100\%.}
\label{fig:fragility}
\end{figure}


\section{Additional Experiments}
\label{app:additional_experiments}

\subsection{Aggregate Accuracy Patterns}

\begin{figure}[H]
\centering
\includegraphics[width=\linewidth]{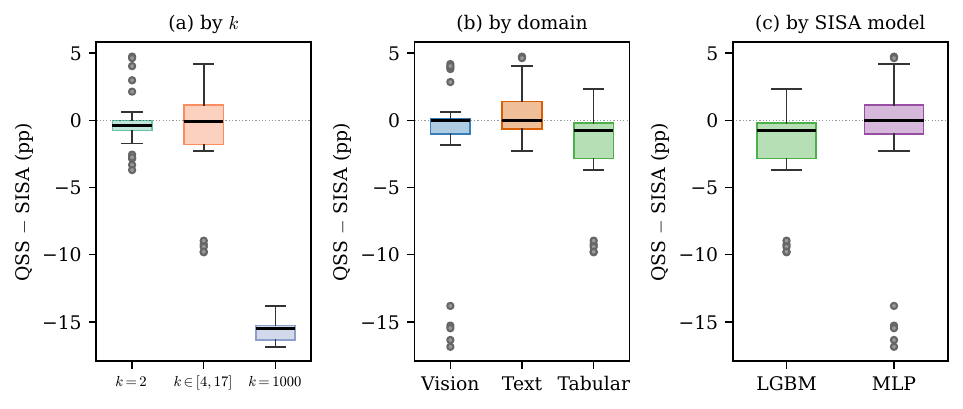}
\caption{Aggregate view across all 15 datasets ($\rho{=}0.5\%$,
per-seed).
\textbf{(a)}~QSS-L--SISA accuracy gap by $k$: binary and
low-$k$ tasks cluster near zero; high-$k$ (ImageNet) is a
clear outlier (\S\ref{sec:limitations}).
\textbf{(b)}~Gap by domain: all three modalities behave
similarly.
\textbf{(c)}~Gap by SISA model type (LGBM,
MLP): QSS is competitive regardless of baseline
architecture.}
\label{fig:aggregate}
\end{figure}

\subsection{Full Accuracy and Speedup}
\label{app:full_summary}

Fixed QSS-L and QSS-E accuracy for all 15 datasets appears in
Table~\ref{tab:fixed_modes}; the low-$k$ QSS-L/SISA comparison appears in
Table~\ref{tab:summary}. We do not report a best-over-guarantees aggregate.

\begin{figure}[H]
\centering
\includegraphics[width=\linewidth]{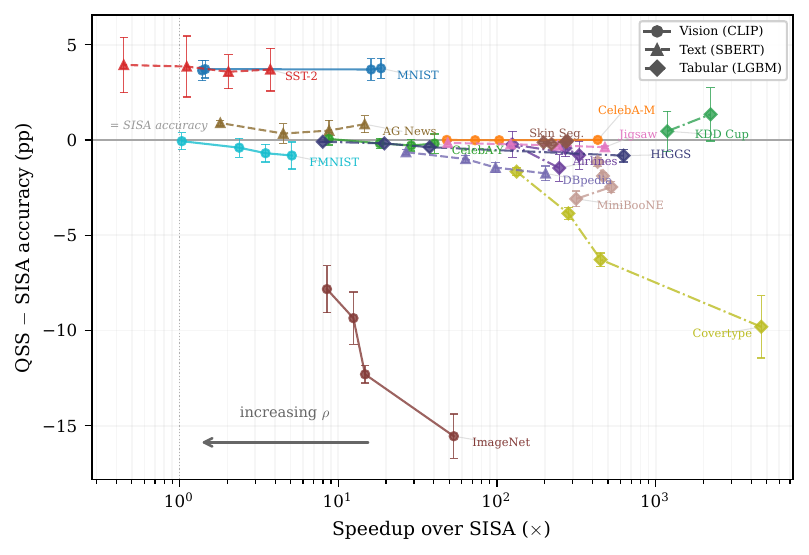}
\caption{QSS-L accuracy--speedup Pareto as $\rho$ varies from
$0.5\%$ to $5\%$, all 15 datasets. Both axes are relative to SISA
($x{=}1$, $y{=}0$). Each trajectory traces one dataset;
moving \emph{left} corresponds to increasing $\rho$
(lower speedup, higher accuracy).
Gaps on ImageNet and Covertype (high-$k$ tasks) narrow
with larger~$\rho$.
Full timing breakdown in Table~\ref{tab:timing}.}
\label{fig:pareto_d0_full}
\end{figure}

\subsection{Expected Unlearning Latency}
\label{app:timing}

\begin{table}[H]
\centering
\caption{Expected unlearning latency at $\rho{=}0.5\%$.
$\mathbb{E}[\text{QSS-L}] =
(1{-}\rho)\,t_{\text{RQ}} + \rho\,t_{\text{rebuild}}$
(Equation~\ref{eq:pareto});
SISA ($S{=}5$) retrains one shard.
$t_{\text{rebuild}}$ includes retraining $g$, re-extracting
embeddings (for LGBM: leaf contributions), re-fitting LDA and
codebook, re-encoding all $n$ points, and re-accumulating.
Per-component RQ timings appear in
Table~\ref{tab:rq_timing}.
$^\dagger$High-$k$ limitation (\S\ref{sec:limitations}).}
\label{tab:timing}
\smallskip
\begin{small}
\begin{tabular}{lrrrrrr}
\toprule
Dataset & $n$ & $\rho$ & $t_{\text{reb}}$ (s)
  & $\mathbb{E}[\text{QSS-L}]$ & SISA & Speedup \\
\midrule
Jigsaw             & 1.4M  & 0.5\% &     44 &   0.2\,s & 109\,s   & 483$\times$ \\
CelebA Male        &  137K & 0.5\% &     13 &   0.1\,s & 9.5\,s   & 132$\times$ \\
MiniBooNE          &  130K & 0.5\% &     30 &   0.2\,s & 17.5\,s  & 114$\times$ \\
DBpedia            &  473K & 0.5\% &     29 &   0.1\,s & 15.8\,s  & 107$\times$ \\
Covertype$^\dagger$&  581K & 0.5\% &    850 &   4.3\,s & 98.7\,s  &  23$\times$ \\
Airlines           &  500K & 0.5\% &     50 &   0.3\,s & 11.0\,s  &  54$\times$ \\
AG~News            &   96K & 0.5\% &     14 &  75\,ms  & 3.8\,s   &  51$\times$ \\
ImageNet$^\dagger$ &  998K & 0.5\% &    227 &   1.1\,s & 52.9\,s  &  46$\times$ \\
MNIST              &   53K & 0.5\% &     13 &  68\,ms  & 3.1\,s   &  46$\times$ \\
CelebA Young       &  137K & 0.5\% &     17 &  88\,ms  & 3.6\,s   &  41$\times$ \\
Skin Seg.          &  245K & 0.5\% &    126 &   0.6\,s & 12.0\,s  &  19$\times$ \\
FMNIST             &   53K & 0.5\% &     64 &   0.3\,s & 2.5\,s   &   8$\times$ \\
HIGGS              &   11M & 0.5\% &  3{,}522 &  17.6\,s & 114\,s   &   6$\times$ \\
KDD~Cup            &  494K & 0.5\% &  2{,}381 &  11.9\,s & 154\,s   &   6$\times$ \\[1pt]
SST-2              &   51K & 0.5\% &     63 &   0.3\,s & 1.2\,s   &   4$\times$ \\
\bottomrule
\end{tabular}
\end{small}
\end{table}

\subsection{Codebook Collapse}
\label{app:codebook_collapse}

\begin{figure}[H]
\centering
\includegraphics[width=0.7\linewidth]{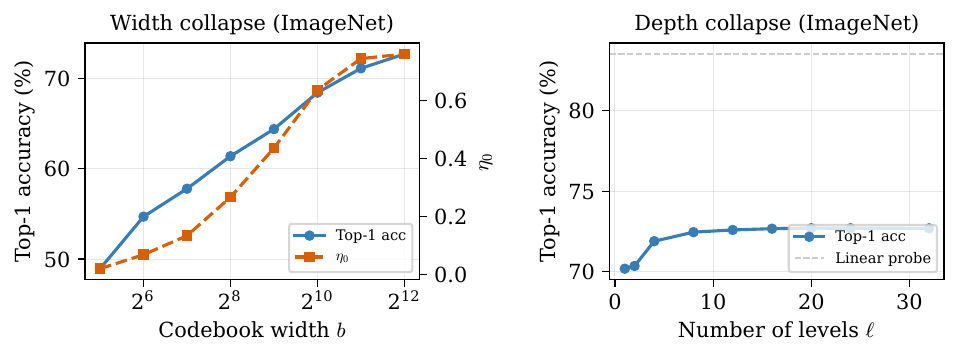}
\caption{Codebook collapse on ImageNet (unsupervised RQ).
Left: width collapse (accuracy and $\eta_0$ grow with $b$).
Right: depth collapse (accuracy plateaus as $\eta_\ell \to 0$).
See Appendix~\ref{app:proof_variance} for the variance decay
analysis that explains this convergence behavior.}
\label{fig:collapse}
\end{figure}

\subsection{Accuracy Retention Under Deletion}

\begin{table}[H]
\centering
\caption{Accuracy retention under random deletion. Unlearn and
retrain-from-scratch produce bitwise identical predictions at every
deletion fraction (Proposition~\ref{prop:exact_unlearning}).}
\label{tab:retention}
\smallskip
\begin{small}
\begin{tabular}{lccccccccc}
\toprule
 & 0\% & 1\% & 5\% & 10\% & 20\% & 30\% & 40\% & 50\% \\
\midrule
CelebA     & 99.35 & 99.35 & 99.34 & 99.36 & 99.33 & 99.33 & 99.32 & 99.33 \\
ImageNet   & 72.67 & 72.67 & 72.66 & 72.65 & 72.66 & 72.65 & 72.48 & 72.44 \\
Covertype  & 75.45 & 75.39 & 75.40 & 75.38 & 75.36 & 75.33 & 75.21 & 75.14 \\
AG News    & 90.08 & 90.11 & 90.05 & 90.14 & 89.95 & 89.89 & 89.89 & 89.86 \\
\bottomrule
\end{tabular}
\end{small}
\end{table}

\noindent CelebA, ImageNet, Covertype, and AG~News are stable through
50\% deletion ($<$0.3pp loss). All four datasets are \emph{bitwise
identical} between unlearn and retrain-from-scratch at every fraction.

\subsection{Membership Inference Attack}
\label{app:mia}

We evaluate membership privacy via LiRA~\cite{carlini2022membership}:
64 shadow QSS models (each trained on a random 50\% subset of CelebA),
auditing 1000 target points.

\begin{table}[H]
\centering
\caption{LiRA membership inference on QSS (CelebA Male).
The attack barely outperforms random guessing at all $\varepsilon$,
including $\varepsilon{=}\infty$ (no DP noise).}
\label{tab:lira}
\smallskip
\begin{small}
\begin{tabular}{lccc}
\toprule
$\varepsilon$ & ROC AUC & TPR@FPR=0.1\% & TPR@FPR=1\% \\
\midrule
1.0            & 0.515 & 0.2\% & 2.2\% \\
2.0            & 0.518 & 0.8\% & 1.2\% \\
4.0            & 0.531 & 0.2\% & 1.4\% \\
$\infty$ (no DP) & 0.525 & 0.0\% & 0.4\% \\
\bottomrule
\end{tabular}
\end{small}
\end{table}

\noindent Even without DP, QSS leaks no membership information:
AUC~$\approx 0.52$. Predictions depend on cell-level aggregates
($n_{m,j} \gg 1$), not individual points.

\begin{figure}[H]
\centering
\includegraphics[width=0.6\linewidth]{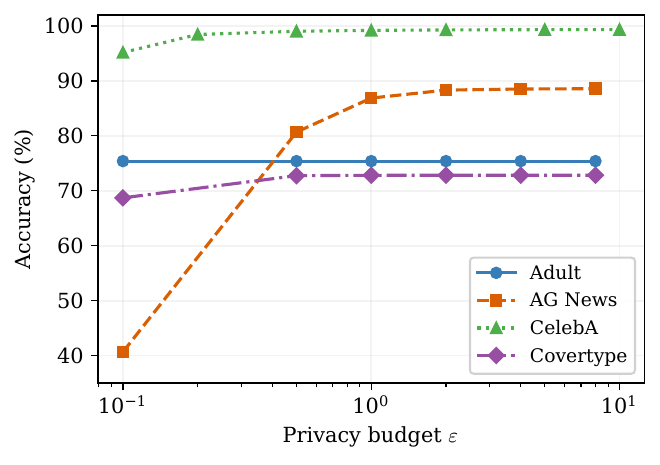}
\caption{$\varepsilon$-label-DP accuracy. Adult (binary) is
insensitive; Covertype (7-class) and AG~News (4-class) degrade
gracefully, retaining most accuracy at $\varepsilon \geq 1$.}
\label{fig:epsilon_accuracy_full}
\end{figure}

\subsection{Variance Decay and Theory Validation}
\label{app:variance_decay_empirical}

Proposition~\ref{prop:variance_decay} (Variance Decay) states that
$\mathrm{Var}(r_m) = \mathrm{Var}(y^r)\,\prod_{j=1}^{m}(1 - \eta_j)$.
If $\eta_m \geq \gamma > 0$, MSE decays as $(1{-}\gamma)^\ell$.
Proof in Appendix~\ref{app:proof_variance}.
Excess risk scales as $b/n$, not $b^\ell/n$
(Appendix~\ref{app:excess_risk}).

\begin{figure}[H]
\centering
\begin{minipage}[t]{0.48\linewidth}
  \centering
  \includegraphics[width=\linewidth]{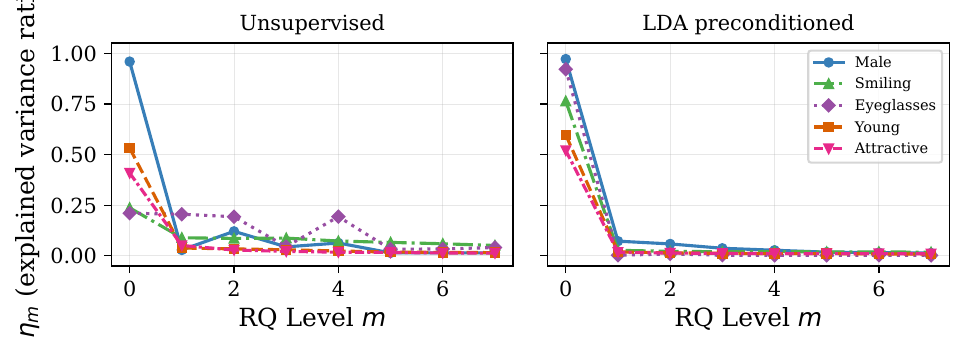}
  \caption{Per-level $\eta_m$ across datasets.}
  \label{fig:eta_per_level}
\end{minipage}\hfill
\begin{minipage}[t]{0.48\linewidth}
  \centering
  \includegraphics[width=\linewidth]{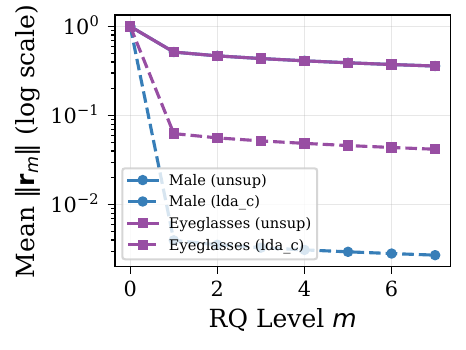}
  \caption{Residual norm decay across levels.}
  \label{fig:residual_decay}
\end{minipage}
\end{figure}

\subsection{Ablations}
\label{app:ablations}

We isolate the contribution of codebook width~$b$, depth~$\ell$,
and LDA preconditioning. All ablations use 5 seeds (0--4) on the
DNN pipeline (MNIST, CIFAR-100, AG~News, Covertype) at
$|\mathcal{D}^{\circ}| \in \{500, 5{,}000\}$.

\begin{figure}[H]
\centering
\includegraphics[width=\linewidth]{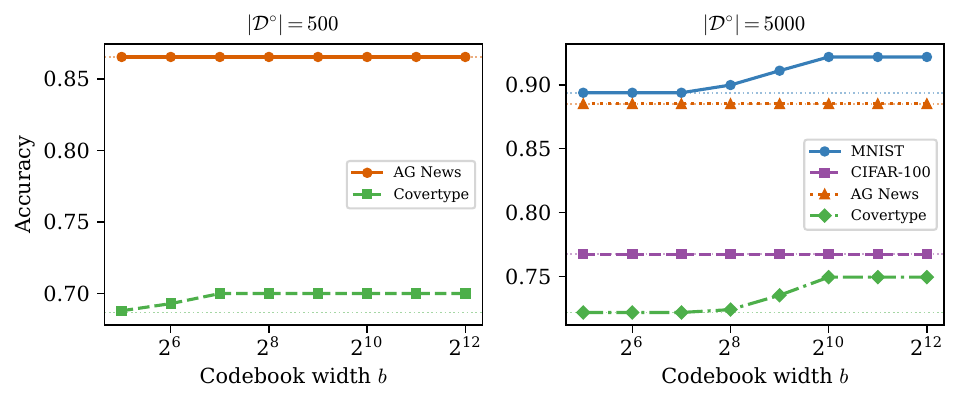}
\caption{Accuracy vs.\ codebook width $b$ ($\ell{=}4$ fixed).
Dotted lines show $g$-only baselines.
Width matters most on MNIST and Covertype at
$|\mathcal{D}^{\circ}|{=}5{,}000$, where larger $b$ provides
finer Voronoi cells that better separate classes.
AG~News and CIFAR-100 are flat ($\alpha \approx 0$).}
\label{fig:ablation_b}
\end{figure}

\begin{figure}[H]
\centering
\includegraphics[width=\linewidth]{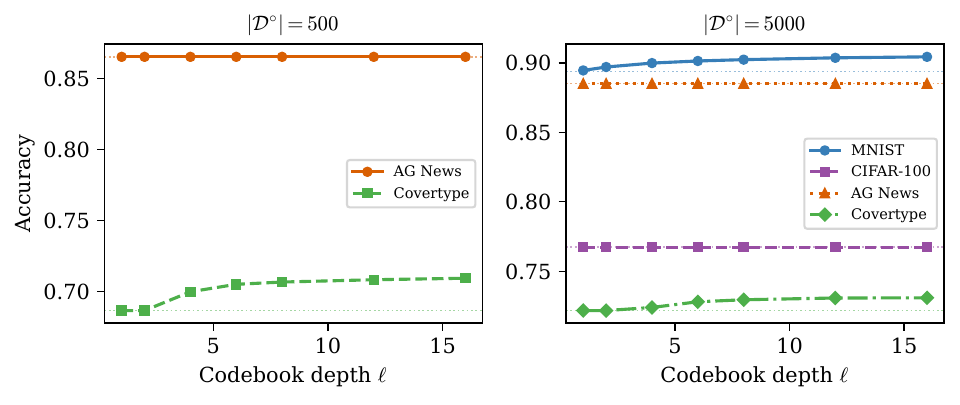}
\caption{Accuracy vs.\ codebook depth $\ell$ ($b{=}256$ fixed).
Covertype gains $\sim$2\,pp from $\ell{=}1$ to $\ell{=}8$;
other datasets saturate by $\ell{=}2$--$4$, consistent with the
depth collapse observed in Figure~\ref{fig:collapse}.}
\label{fig:ablation_ell}
\end{figure}

\begin{figure}[H]
\centering
\includegraphics[width=0.85\linewidth]{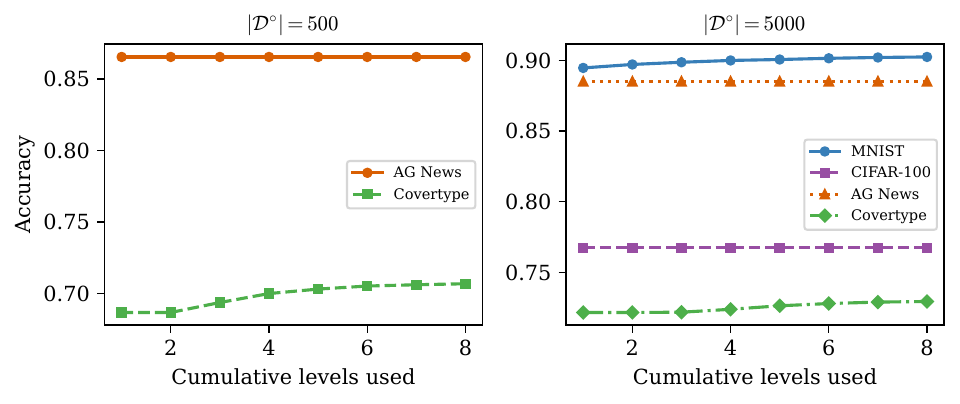}
\caption{Cumulative accuracy as RQ levels are added.
Level~1 captures most of the QSS lift; deeper levels provide
diminishing returns, confirming the variance decay prediction
(Proposition~\ref{prop:variance_decay}).}
\label{fig:ablation_per_level}
\end{figure}

\begin{figure}[H]
\centering
\includegraphics[width=0.85\linewidth]{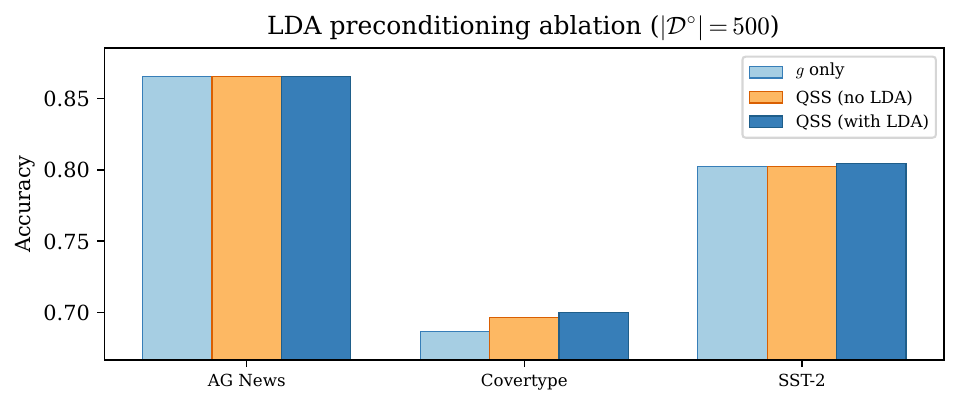}
\caption{LDA preconditioning ablation
($|\mathcal{D}^{\circ}|{=}500$). LDA provides a modest lift on
Covertype ($+1.5$\,pp) where $k{-}1{=}6$ discriminant directions
help the unsupervised codebook align with class boundaries.
Effect is negligible on AG~News and SST-2 where the pretrained
SBERT embeddings already separate classes well.}
\label{fig:ablation_lda}
\end{figure}

\subsection{Hyperparameter Sensitivity and Rebuild Efficiency}
\label{app:sensitivity}

The codebook parameters $(b, \ell)$ jointly control accuracy and
unlearning speed: $t_{\text{RQ}} = O(\ell^2 b^2 k)$
(Equation~\ref{eq:pareto}).
Cross-validation selects $(b, \ell, \alpha)$ to maximize held-out
accuracy on $\mathcal{D}^{\circ}$; here we examine the full
parameter surface to characterize robustness.

\paragraph{Accuracy is robust to $(b, \ell)$ when $g$ is strong.}
Figure~\ref{fig:pareto_bl} sweeps $b$ and $\ell$ independently on
two representative datasets.
On MiniBooNE (tabular, $g$-only $= .912$), QSS accuracy is flat
across the entire $(b, \ell)$ range: $(b{=}32, \ell{=}1)$ and
$(b{=}512, \ell{=}8)$ yield the same $.913$ accuracy,
because the blending weight $\alpha \approx 0$ lets the strong
base model dominate.
On FMNIST (vision, $g$-only $= .749$), QSS accuracy increases
with both $b$ and $\ell$: moving from $\ell{=}1$ to $\ell{=}8$
gains $1.5$\,pp, because the cell-mean correction contributes
meaningfully when $g$ is far from the ceiling.
The practical consequence is that datasets where $g$ already
performs well (typically those with high-quality embeddings) can
use minimal codebook parameters without sacrificing accuracy,
reducing $t_{\text{RQ}}$ by orders of magnitude
(e.g., $0.02$\,ms at $(32,1)$ vs.\ $9$\,ms at $(512,8)$).

\begin{figure}[H]
\centering
\includegraphics[width=\linewidth]{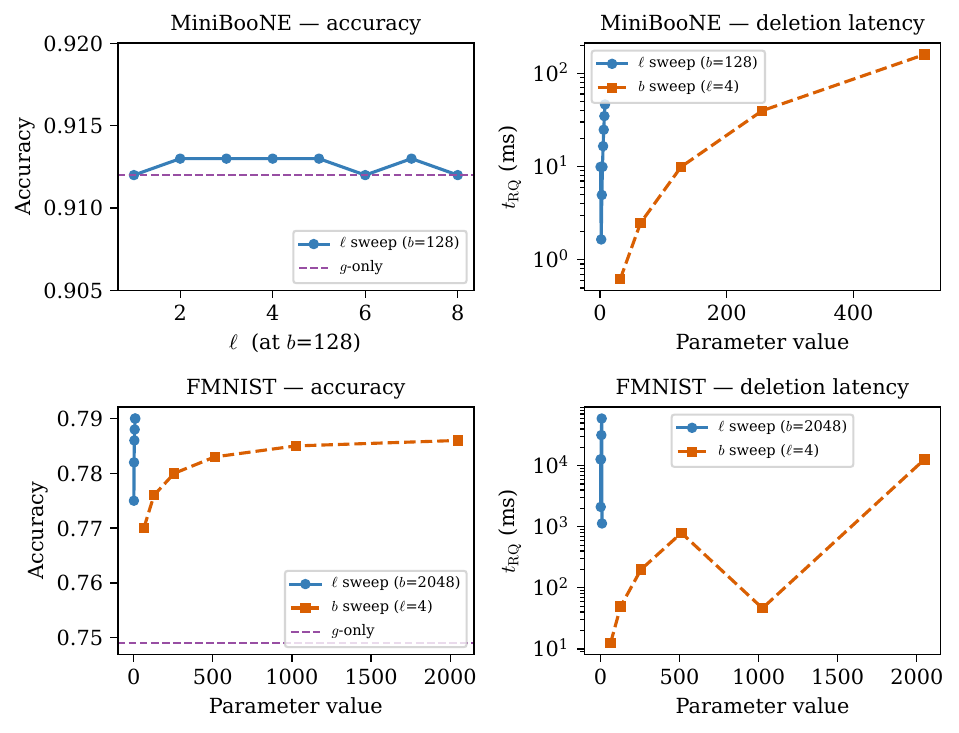}
\caption{QSS accuracy (left) and $t_{\text{RQ}}$ (right) as
codebook parameters vary.
\textbf{Top:} MiniBooNE ($\ell$ sweep at $b{=}128$; $b$ sweep
at $\ell{=}4$).  Accuracy is flat; $t_{\text{RQ}}$ spans
three orders of magnitude.
\textbf{Bottom:} FMNIST (same sweeps). Accuracy improves with
larger codebooks; we trade deletion speed for
accuracy via CV.}
\label{fig:pareto_bl}
\end{figure}

\paragraph{Accumulator convergence.}
A full rebuild re-encodes all $n$ points and re-accumulates their
labels into the joint-count tables.
Since these tables are sufficient statistics (cell-level sums and
counts), they converge as the per-cell sample sizes grow.
We measure accuracy as a function of the fraction of
$\mathcal{D}^{\bullet}$ encoded into accumulators
(Figure~\ref{fig:acc_convergence}).

On tabular datasets with well-clustered LGBM embeddings,
accumulators converge rapidly: Skin~Segmentation reaches full
accuracy at 1\% of $n$ (2.5K points), KDD~Cup at 5\%,
HIGGS at 10\%, and Airlines/MiniBooNE at 25\%
(Table~\ref{tab:convergence}).
This reflects high cell purity: LGBM leaf embeddings place
same-class points in tight clusters, so even a fraction of the
data yields stable per-cell label distributions.

On DNN datasets (CLIP, MiniLM), embeddings are more diffuse and
accumulators require the full dataset to converge.
FMNIST loses $1$\,pp at 50\% fill and $9.5$\,pp at 1\%;
SST-2 shows a similar pattern.
QSS genuinely leverages all available data in this regime.

\begin{table}[H]
\centering
\caption{Accumulator convergence on tabular datasets.
\emph{Saturation fraction}: smallest fraction of
$\mathcal{D}^{\bullet}$ at which accuracy is within 0.15\,pp
of full.  Subsampled rebuilds reduce the encode$+$accumulate
cost proportionally.}
\label{tab:convergence}
\smallskip
\begin{small}
\begin{tabular}{lrrrl}
\toprule
Dataset & $n$ & Saturation & Gap & Rebuild savings \\
\midrule
Skin Seg.    & 245K &  1\% (2.5K)  & $-$0.02\,pp & 100$\times$ \\
KDD Cup      & 494K &  5\% (25K)   & $-$0.15\,pp &  20$\times$ \\
HIGGS        &  11M & 10\% (1.1M)  & $-$0.02\,pp &  10$\times$ \\
Covertype    & 581K & 10\% (58K)   & $-$0.40\,pp &  10$\times$ \\
Airlines     & 500K & 25\% (125K)  & $-$0.05\,pp &   4$\times$ \\
MiniBooNE    & 130K & 25\% (33K)   & $-$0.15\,pp &   4$\times$ \\
\bottomrule
\end{tabular}
\end{small}
\end{table}

\begin{figure}[H]
\centering
\includegraphics[width=0.85\linewidth]{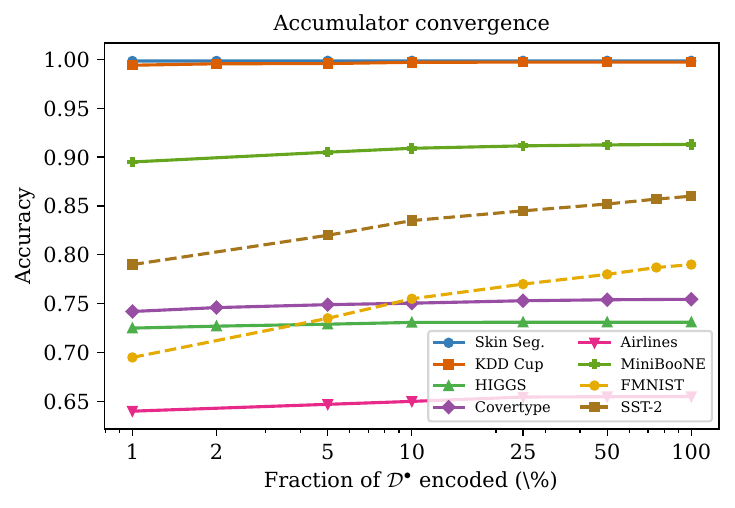}
\caption{Accuracy vs.\ fraction of $\mathcal{D}^{\bullet}$
encoded into accumulators.
Tabular datasets (solid) converge rapidly; DNN datasets (dashed)
require most or all of the data.
The convergence rate reflects embedding geometry:
well-clustered embeddings yield high cell purity and fast
convergence.}
\label{fig:acc_convergence}
\end{figure}

The convergence behavior has a direct practical implication for
rebuild cost.
The full rebuild pipeline has four stages: retrain~$g$, retrain
codebook (both on $\mathcal{D}^{\circ}$ only, independent of
$|\mathcal{D}^{\bullet}|$), encode $\mathcal{D}^{\bullet}$, and
accumulate.
Only the last two stages scale with the number of points encoded.
For datasets where accumulators saturate at fraction~$f$, the
encode$+$accumulate cost drops by $1/f$ while the
$g$-training and codebook-training costs remain unchanged.
On HIGGS ($n{=}11$M), encoding 1.1M points instead of 11M
reduces the encode$+$accumulate phase from ${\sim}9$\,s to
${\sim}0.9$\,s, cutting total rebuild time and increasing the
expected speedup over SISA accordingly.

\subsection{Experimental Details}
\label{app:experimental_details}

\paragraph{Base model $g$.}
Linear probe (logistic regression for classification, ridge regression
for regression) trained on $\mathcal{D}^{\circ}$ via Optuna
(200 trials, TPE sampler) over learning rate ($[10^{-5}, 10^{-1}]$),
weight decay ($[10^{-6}, 10^{-2}]$), epochs ($\{10, 50, 100, 200\}$),
and batch size ($\{32, 64, 128, 256\}$), minimizing validation loss
on a 20\% stratified holdout of $\mathcal{D}^{\circ}$.
When $|\mathcal{D}^{\circ}| \geq 5{,}000$, we also evaluate a
one-hidden-layer MLP ($D \to 128 \to k$, ReLU, Adam lr$=$1e-3,
wd$=$1e-4, bs$=$256, early stopping patience$=$3 on validation loss) and report
whichever yields higher validation accuracy.

\paragraph{Residual Quantization.}
Faiss~\cite{johnson2019billion} (version 1.7.4)
\texttt{ResidualQuantizer} with
\texttt{Train\_default} (k-means++ initialization) and
\texttt{max\_beam\_size}$=$1 (greedy
nearest-centroid assignment), $L_2$ distance.
Codebook hyperparameters ($b$, $\ell$)
selected by Optuna (30 trials per mode, TPE sampler) over
$b \in \{64, 128, 256, 512, 1024, 2048\}$ and
$\ell \in \{2, 4, 6, 8, 10, 12, 16\}$, with blending weight
$\alpha \in [0, 2]$ (grid of 21 values) cross-validated on the same
stratified holdout as the base model, optimizing validation accuracy
(classification) or Pearson~$r$ (regression).
All embeddings are $L_2$-normalized before codebook training.

\paragraph{LDA preconditioning.}
$W = W_{\text{LDA}} \in \mathbb{R}^{(k{-}1) \times d}$: all $k{-}1$
discriminant directions from standard LDA (scikit-learn
\texttt{LinearDiscriminantAnalysis}, default solver) fitted on
$\mathcal{D}^{\circ}$ labels and embeddings.
This is analogous to Optimized Product
Quantization~\cite{ge2013optimized}, but uses label supervision
from $\mathcal{D}^{\circ}$ rather than reconstruction error.
Feature map: $\varphi(x) = L_2\text{-normalize}([f(x);\; W\,f(x)])$
(concatenation, not projection). When $\mathcal{D}^{\circ} = \varnothing$,
$W = I$ (no preconditioning).

\paragraph{Encoders.}
Vision: OpenAI CLIP ViT-L/14
(\texttt{openai/clip-vit-large-patch14}, 768-d).
Text: Sentence-Transformers
\texttt{sentence-transformers/all-MiniLM-L6-v2} (384-d).
Standard preprocessing per encoder (CLIP: 224$\times$224 center crop;
SBERT: default tokenizer, max 512 tokens).

\paragraph{Tabular encoder (LightGBM leaf contributions).}
For tabular datasets (Skin Segmentation, Adult, Covertype, California
Housing), we train a LightGBM model~\cite{ke2017lightgbm} on
$\mathcal{D}^{\circ}$ and extract per-tree leaf indices as the
embedding. Configuration: 200 trees, max depth 6, learning rate 0.1,
min data in leaf 20, bagging fraction 0.8, feature fraction 0.8,
\texttt{num\_leaves}$=$31. Each input is mapped to a 200-d binary
vector (one-hot leaf index per tree, concatenated), yielding
$D{=}200$. The LightGBM model is trained only on
$\mathcal{D}^{\circ}$ and frozen, satisfying the same independence
requirement as the CLIP/SBERT encoders. Leaf contributions are
extracted via \texttt{model.predict(X, pred\_leaf=True)}.

\paragraph{SISA.}
$S{=}5$ shards. Each shard trains the same model architecture as $g$
on $\mathcal{D}^{\circ} \cup \text{shard}_k$. Predictions are
majority-voted (classification) or averaged (regression).

\paragraph{Data splits.}
Each dataset is split into $\mathcal{D}^{\circ}$ ($n^{\circ}$ points
selected uniformly at random, stratified by class for classification),
$\mathcal{D}^{\bullet}$ (remainder minus test), and a held-out test
set. Datasets with standard splits (ImageNet, CIFAR-100, MNIST,
AG~News, SST-2, DBpedia) use their official test sets; all others
use a 25\% random stratified test set.
Results are averaged over 5 random splits (seeds 0--4); the seed
controls $\mathcal{D}^{\circ}$ selection, RQ $k$-means initialization,
and Optuna sampling.

\paragraph{Timing experiments.}
All timing comparisons are measured on the same GPU (NVIDIA A100),
same process, median of 10 runs. SISA deletion retrains
\texttt{nn.Linear}+SGD on
$\mathcal{D}^{\circ} \cup \text{affected shard}$ with the same
Optuna-selected hyperparameters.
Table~\ref{tab:rq_timing} reports the raw RQA data structure
update time (the $\mathcal{D}^{\bullet}$ component of the
expected cost in Table~\ref{tab:timing}).

\begin{table}[H]
\centering
\caption{RQA data structure update latency (median of 10 runs,
A100 GPU): encode the deleted point, decrement accumulators,
and re-materialize prediction tables.  This is the
$t_{\text{RQ}}$ component of the expected QSS cost
(Table~\ref{tab:timing}).}
\label{tab:rq_timing}
\smallskip
\begin{small}
\begin{tabular}{lrrrr}
\toprule
Dataset & $n$ & RQ (ms) & SISA (ms) & Speedup \\
\midrule
CalHousing   & 21K  &   3.9 &      994 &    257$\times$ \\
UTKFace      & 24K  &   5.7 &  68{,}622 & 12{,}096$\times$ \\
DTD          & 5.6K &   6.1 &  26{,}105 &  4{,}278$\times$ \\
Adult        & 45K  &   8.0 &   1{,}188 &    148$\times$ \\
Skin Seg.    & 245K &  10.9 &   1{,}025 &     94$\times$ \\
SST-2        & 68K  &  12.0 &   1{,}303 &    109$\times$ \\
CIFAR-100    & 60K  &  12.0 &  31{,}563 &  2{,}620$\times$ \\
MNIST        & 70K  &  13.0 &  21{,}614 &  1{,}661$\times$ \\
AG~News      & 128K &  23.2 &   4{,}924 &    213$\times$ \\
Covertype    & 581K &  88.4 &  10{,}495 &    119$\times$ \\
DBpedia      & 630K & 186.7 &  20{,}642 &    111$\times$ \\
CelebA Young & 1.3M & 389.3 & 527{,}271 &  1{,}354$\times$ \\
ImageNet     & 1.3M & 429.8 & 376{,}799 &    877$\times$ \\
\bottomrule
\end{tabular}
\end{small}
\end{table}

\subsection{Detailed Pipeline Flowchart}
\label{app:flowchart}

\begin{figure}[H]
\centering
\begin{tikzpicture}[
  scale=0.9,
  transform shape,
  node distance=0.7cm and 0.9cm,
  every node/.style={font=\small},
  frozen/.style={draw, minimum height=0.65cm, minimum width=2.0cm,
    align=center, fill=black!8},
  mutable/.style={draw, minimum height=0.65cm, minimum width=2.0cm,
    align=center, fill=white, line width=0.8pt},
  dataset/.style={draw, minimum height=0.65cm, minimum width=1.8cm,
    align=center, font=\small\bfseries, fill=black!15},
  arr/.style={->, >=stealth, semithick},
  darr/.style={->, >=stealth, semithick, dashed, draw=black!60},
  lbl/.style={font=\scriptsize, text=black!55, inner sep=1pt},
  cost/.style={font=\scriptsize\ttfamily, text=black!60},
]

\node[dataset] (D) at (0, 0) {$\mathcal{D}$};

\node[frozen, below left=0.9cm and 1.2cm of D] (D0)
  {$\mathcal{D}^{\circ}$ {\scriptsize(non-del.)}};
\node[mutable, below right=0.9cm and 1.2cm of D] (Db)
  {$\mathcal{D}^{\bullet}$ {\scriptsize(deletable)}};

\draw[arr] (D.south) -- ++(0,-0.25) -| (D0.north)
  node[pos=0.75, left, font=\scriptsize] {$\rho$};
\draw[arr] (D.south) -- ++(0,-0.25) -| (Db.north)
  node[pos=0.75, right, font=\scriptsize] {$1{-}\rho$};

\node[frozen, below left=1.0cm and 1.4cm of D0] (g)
  {base model $g$};
\node[frozen, below=1.0cm of D0] (lda)
  {encoder $f$};
\node[frozen, below right=1.0cm and 1.4cm of D0] (rq)
  {RQ codebook};

\draw[arr] (D0.south) -- ++(0,-0.25) -| (g.north);
\draw[arr] (D0.south) -- (lda.north);
\draw[arr] (D0.south) -- ++(0,-0.25) -| (rq.north);

\draw[black!40, thick]
  ([xshift=-0.3cm]g.north west) --
  ([xshift=-0.3cm]g.south west);
\node[lbl, left=0.35cm of g, text=black!45,
  font=\scriptsize\itshape, align=right]
  {schema\\(frozen)};

\node[frozen, below=1.2cm of lda] (enc)
  {Encode $\varphi(x)$};
\node[mutable, right=1.8cm of enc, line width=1.0pt] (acc)
  {Accumulators $\mathbf{T}, n, N$};

\draw[arr] (lda.south) -- (enc.north);
\draw[arr, rounded corners=3pt] (rq.south)
  |- ([yshift=0.6cm, xshift=0.5cm]enc.north)
  -- ([xshift=0.5cm]enc.north);
\draw[arr, rounded corners=3pt] (Db.south)
  |- ([yshift=0.3cm]enc.north east)
  node[lbl, pos=0.75, above] {all $\mathcal{D}$}
  -- (enc.north east);

\draw[arr] (enc.east) -- (acc.west)
  node[lbl, midway, above] {codes $+ \, y^r_i$};

\node[lbl, right=0.15cm of acc, text=black!45,
  font=\scriptsize\itshape, align=left]
  {content\\(mutable)};

\node[draw, rounded corners=3pt, fill=black!5,
  minimum height=0.65cm, minimum width=3.6cm,
  below=1.5cm of enc, xshift=0.5cm, align=center] (pred)
  {$\hat{y} = g(x) + \alpha \cdot \textsc{RQA.Predict}(x)$};


\draw[arr] (g.south) |- (pred.west);
\draw[arr] (acc.south) |- (pred.east);

\node[draw, minimum height=0.6cm, minimum width=1.6cm,
  below=1.8cm of acc, xshift=-0.5cm] (check)
  {$x_j \in \mathcal{D}^{\circ}$\,?};

\node[mutable, above right=0.1cm and 0.8cm of check,
  minimum width=1.5cm] (dec)
  {Decrement};
\node[mutable, right=0.4cm of dec,
  minimum width=1.5cm] (mat)
  {Materialize};

\draw[arr] (check.east) -- ++(0.3,0) |- (dec.west)
  node[lbl, pos=0.25, above] {No};
\draw[arr] (dec.east) -- (mat.west)
  node[cost, midway, below] {$O(\ell^2 b^2 k)$};
\node[cost] at ([yshift=-0.25cm]dec.south) {$O(\ell^2)$};

\node[frozen, below right=0.5cm and 1.2cm of check,
  minimum width=2.5cm] (reb)
  {Rebuild schema $+$ accum};

\draw[arr] (check.east) -- ++(0.3,0) |- (reb.west)
  node[lbl, pos=0.25, below] {Yes};
\node[cost] at ([yshift=-0.25cm]reb.south) {$O(n)$};

\draw[darr] (mat.north) -- ++(0,0.4) -| (acc.south east)
  node[lbl, pos=0.2, right] {};

\node[lbl, font=\scriptsize\itshape] at ([yshift=-0.25cm]mat.south)
  {prob.\ $1{-}\rho$};
\node[lbl, font=\scriptsize\itshape] at ([yshift=-0.55cm]reb.south)
  {prob.\ $\rho$};

\matrix[draw, below=2.8cm of pred, anchor=center,
  column sep=0.8cm, inner sep=4pt, rounded corners=2pt] {
  \node[frozen, minimum height=0.3cm, minimum width=0.5cm,
    inner sep=0pt] {};
  & \node[font=\scriptsize] {Frozen (indep.\ of deletable data)};
  & \node[mutable, minimum height=0.3cm, minimum width=0.5cm,
    inner sep=0pt] {};
  & \node[font=\scriptsize] {Mutable (updated on deletion)}; \\
};

\end{tikzpicture}
\caption{End-to-end QSS pipeline. A random fraction~$\rho$ of
$\mathcal{D}$ forms the non-deletable subset~$\mathcal{D}^{\circ}$,
which defines the \emph{schema}: encoder~$f$, base model~$g$, and
the RQ codebook---all frozen before any deletion request. All
of~$\mathcal{D}$ is encoded and accumulated into mutable sufficient
statistics (\emph{content}). To unlearn a point from
$\mathcal{D}^{\bullet}$ (probability $1{-}\rho$): decrement
accumulators in $O(\ell^2)$ and re-materialize in
$O(\ell^2 b^2 k)$---the schema is untouched. If the point is in
$\mathcal{D}^{\circ}$ (probability~$\rho$): a full rebuild is
required at cost~$O(n)$.}
\label{fig:pipeline_flowchart}
\end{figure}

\section{Guarantee-Aligned and Analytic Comparisons}
\label{app:response_analysis}

\subsection{Fixed QSS-L and QSS-E results}
\label{app:mode_separation}

Table~\ref{tab:fixed_modes} uses one fixed construction per guarantee rather
than selecting between QSS-E and QSS-L using test accuracy. QSS-E trains its
codebook on the retained schema; QSS-L uses the fixed primary label-deletion
construction for each feature pipeline. QSS-L is more accurate on all 15 tasks
(median difference $5.27$ pp), quantifying the cost of requiring the codebook to
be independent of every deletable input.

\begin{table}[h]
\centering
\caption{Fixed-mode accuracy (\%, mean $\pm$ standard deviation over five
seeds) at $\rho=0.5\%$. The required neighboring relation, not test accuracy,
determines the mode.}
\label{tab:fixed_modes}
\small
\setlength{\tabcolsep}{3.5pt}
\begin{tabular}{lrr@{\qquad}lrr}
\toprule
Dataset & QSS-E & QSS-L & Dataset & QSS-E & QSS-L \\
\midrule
MNIST & $81.83{\pm}1.42$ & $94.43{\pm}0.34$ &
FMNIST & $75.03{\pm}1.56$ & $82.49{\pm}1.40$ \\
CelebA Young & $89.24{\pm}0.10$ & $89.34{\pm}0.52$ &
CelebA Male & $99.21{\pm}0.05$ & $99.25{\pm}0.05$ \\
SST-2 & $79.79{\pm}0.65$ & $85.36{\pm}1.97$ &
AG News & $84.62{\pm}0.53$ & $89.89{\pm}0.42$ \\
DBpedia & $91.24{\pm}0.97$ & $95.80{\pm}0.42$ &
Jigsaw & $92.03{\pm}0.05$ & $92.35{\pm}0.06$ \\
ImageNet & $46.24{\pm}0.31$ & $66.19{\pm}1.92$ &
CIFAR-100 & $39.03{\pm}2.55$ & $68.42{\pm}4.86$ \\
Skin Seg. & $99.46{\pm}0.14$ & $99.71{\pm}0.08$ &
Airlines & $57.80{\pm}0.17$ & $64.02{\pm}0.22$ \\
MiniBooNE & $86.88{\pm}0.21$ & $91.24{\pm}0.37$ &
HIGGS & $63.52{\pm}0.05$ & $72.98{\pm}0.08$ \\
KDD Cup & $99.30{\pm}0.03$ & $99.88{\pm}0.01$ & & & \\
\bottomrule
\end{tabular}
\end{table}

For QSS-L, we additionally deleted batches of 1, 10, 100, and 1,000
labels on AG~News. In every case, the source statistics, materialized cell
values, and prediction scores were bitwise identical to rebuilding from
$\mathcal{D}^{-y}$; prediction disagreement was zero.

\subsection{Exact analytic heads}
\label{app:analytic}

Ridge regression on a fixed representation is an important exact baseline. It
stores $A=X^\top X+\lambda I$ and $B=X^\top Y$; deleting $(x,y)$ applies
$A\leftarrow A-xx^\top$ and $B\leftarrow B-xy^\top$, followed by a
Sherman--Morrison downdate. We reconstructed the five submitted AG~News and
SST-2 splits, selected $\lambda$ using only an internal split of each training
set, and then fit the final head using all training labels.

\begin{table}[h]
\centering
\caption{Exact ridge and fixed QSS-L on identical frozen-SBERT features and
submitted splits. The QSS-L advantage on SST-2 is not statistically significant.}
\label{tab:ridge_comparison}
\small
\begin{tabular}{lrrrr}
\toprule
Dataset & Exact ridge & QSS-L & Difference & Paired $p$ \\
\midrule
AG News & $89.468$ & $89.894$ & $+0.426$ pp & $0.048$ \\
SST-2   & $83.709$ & $85.356$ & $+1.647$ pp & $0.121$ \\
\bottomrule
\end{tabular}
\end{table}

The Sherman--Morrison result agrees with a solve from rebuilt statistics to at
most $1.11\times10^{-12}$ coefficient error on AG~News and
$1.10\times10^{-13}$ on SST-2. Neither family dominates computationally:

\begin{table}[h]
\centering
\caption{State and ordinary-deletion costs, both independent of dataset size
$n$. Here $d$ is analytic feature dimension and $K$ the number of outputs.}
\label{tab:analytic_complexity}
\small
\begin{tabular}{lcc}
\toprule
Method & State & Ordinary deletion \\
\midrule
Ridge/ACU & $O(d^2+dK)$ & $O(d^2+dK)$ \\
QSS & $O(\ell^2b^2+\ell bK)$ & $O(\ell^2b^2K)$ conservative \\
\bottomrule
\end{tabular}
\end{table}

\subsection{Orthogonal schema and correction choices}
\label{app:orthogonal_choices}

QSS combines two design decisions that need not be made together.  The first is
\emph{what target to model}: one may fit the target directly on a fixed
representation, or first fit a retained-schema predictor $g$ and model the
residual $y-g(x)$.  This choice determines whether the correction depends on a
sampled schema.  Whenever $g$ changes after a schema deletion, every residual
defined relative to that copy of $g$ must be rebuilt, independently of how the
residual is stored.

The second decision is \emph{how to represent the mutable correction}.  A
linear or ridge correction stores global analytic sufficient statistics such
as $X^\top X$ and $X^\top Y$, as in ACU and the centralized construction of
Quan et al.~\cite{tang2025acu,quan2026exact}.  QSS instead stores local sums and
counts indexed by a frozen quantizer.  This choice determines the hypothesis
class and the state/update costs, rather than whether a schema is present.

Consequently, a standalone analytic head has no sampled-schema event; a QSS
predictor uses a schema-relative quantized residual; and a composed method can
use a schema predictor with an analytic residual head.  In the composed case,
each replicated schema owns its base predictor and residual statistics, so a
schema hit invalidates only that copy.  Analytic heads are preferable when a
modest-dimensional linear correction is expressive, while quantized statistics
provide local nonlinear capacity without introducing an explicit high-dimensional
feature expansion.

\subsection{Larger SISA shard counts}
\label{app:large_s}

Increasing $S$ can make an affected shard faster to retrain, but it does not
remove the coupling between deletion work and data per constituent model.
Table~\ref{tab:large_s} holds each model architecture fixed while extending
SISA through $S=200$.

\begin{table}[h]
\centering
\caption{SISA accuracy and isolated affected-shard retraining time.}
\label{tab:large_s}
\small
\begin{tabular}{lccc}
\toprule
Dataset & $S=5$ & $S=50$ & $S=200$ \\
\midrule
Adult & $83.30\%$ / $6.1$ ms & $83.32\%$ / $2.2$ ms & $81.91\%$ / $1.2$ ms \\
Covertype & $71.99\%$ / $3.379$ s & $71.46\%$ / $0.160$ s & $70.60\%$ / $0.0256$ s \\
SST-2 & $80.73\%$ / $0.1947$ s & $79.82\%$ / $0.1050$ s & $78.56\%$ / $0.0035$ s \\
\bottomrule
\end{tabular}
\end{table}

\subsection{Schema capacity and retained-set size}
\label{app:capacity}

The schema head should be selected jointly with~$\rho$. At the intended
$\rho=0.5\%$ regime, additional depth is often variance-limited: linear,
one-layer, and three-layer heads obtain $81.06/75.72/78.39\%$ on Adult,
$71.17/69.53/71.82\%$ on Covertype, and $72.40/73.47/74.46\%$ on SST-2.
Three-layer training is $8.2\times$ the linear-head time on Covertype and
$3.7\times$ on SST-2.

More retained data can make depth useful, at the cost of a larger schema. On
Covertype, the three-layer advantage over linear grows from $0.65$ pp at
$\rho=0.5\%$ to $5.75$ pp at $2\%$ and $10.74$ pp at $5\%$; its training time
at $5\%$ is $14.65$ s versus $0.86$ s for linear. At $5\%$, corresponding
three-seed QSS-L predictors obtain $72.90/77.20/83.83\%$ with linear,
one-layer, and three-layer schemas. The pattern is not universal: SST-2 favors
three layers at $0.5\%$, one layer at $2\%$, and one/two layers are effectively
tied at $5\%$. This is the practical scope boundary: richer schemas can improve
accuracy, but increasing~$\rho$ raises both rebuild probability and rebuild
cost.

\section{Unsupervised Quality Diagnostics}
\label{app:quality_unsup}

\begin{table}[H]
\centering
\caption{Without preconditioning, $\eta_0$ predicts quality.}
\smallskip
\begin{small}
\begin{tabular}{lcc}
\toprule
Attribute & $\eta_0$ & PR-AUC \\
\midrule
Male       & 0.960 & 0.998 \\
Young      & 0.532 & 0.959 \\
Attractive & 0.409 & 0.832 \\
Smiling    & 0.237 & 0.887 \\
Eyeglasses & 0.210 & 0.851 \\
\bottomrule
\end{tabular}
\end{small}
\end{table}

\subsection{Cell Purity Across Modalities}
\label{app:cell_purity}

Figure~\ref{fig:umap_purity} shows UMAP projections of test-set
embeddings from four datasets, shaded by level-0 cell purity (the
fraction of each cell's points belonging to the majority class).
MNIST (CLIP, $k{=}10$) achieves 96.6\% average purity at level~0:
the unsupervised RQ codebook already separates classes well, which
explains QSS's strong performance. At the other extreme, Covertype
($k{=}7$, $b{=}64$) reaches only 63.1\%.
CIFAR-100 ($k{=}100$) and AG~News ($k{=}4$) fall between.

Multi-level RQ drives effective purity above 99.8\% on all four
datasets (level-0 impurity is corrected by deeper levels), but the
per-level improvement depends on the base purity: when level~0 is
already high, fewer levels suffice.

\begin{figure}[H]
\centering
\includegraphics[width=\linewidth]{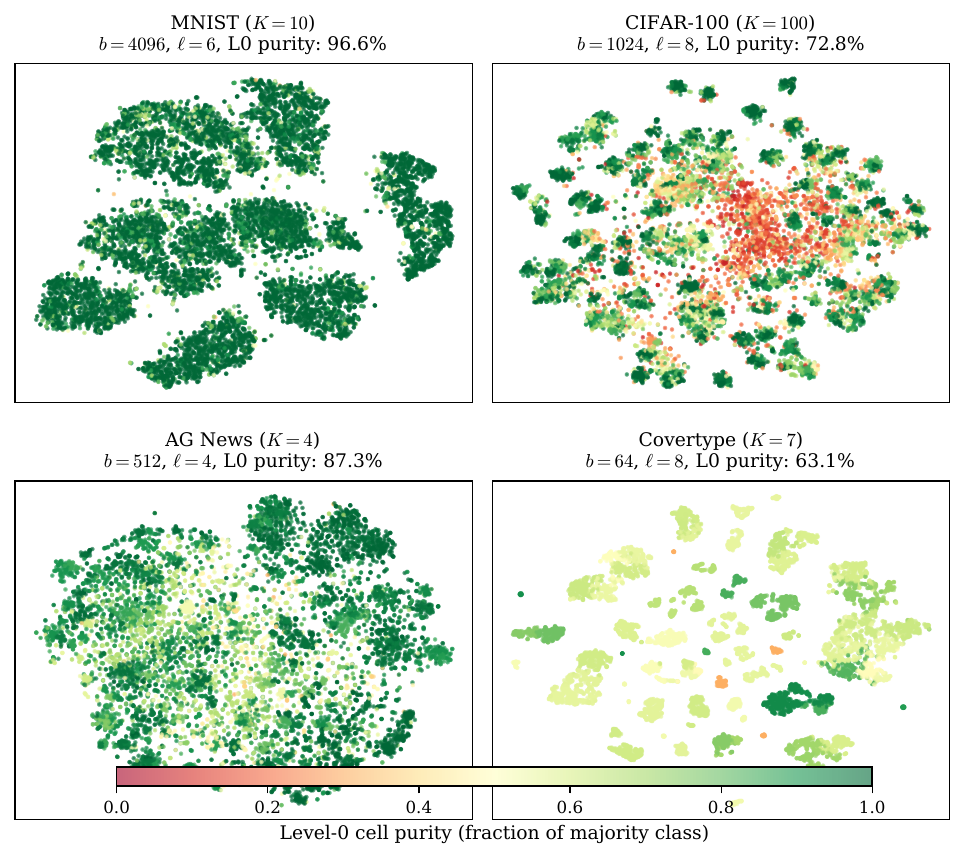}
\caption{UMAP projections of test-set embeddings, shaded by level-0
cell purity (fraction of each cell belonging to the majority class).
Dark regions indicate pure cells ($>$90\%); light regions indicate
mixed cells.
MNIST (96.6\% average purity) achieves the smallest QSS-L--SISA gap
($+3.8$\,pp); Covertype (63.1\%) has the largest among low-$k$
tasks ($-10.7$\,pp).}
\label{fig:umap_purity}
\end{figure}


\newpage
\section*{NeurIPS Paper Checklist}

\begin{enumerate}

\item {\bf Claims}
    \item[] Question: Do the main claims made in the abstract and introduction accurately reflect the paper's contributions and scope?
    \item[] Answer: \answerYes{}
    \item[] Justification: The abstract and introduction distinguish exact label deletion (QSS-L) from exact instance deletion (QSS-E), state the randomized rebuild event, and scope the empirical claims to the corresponding construction. Theoretical results appear in \S3 and guarantee-aligned experiments in \S4 and the appendices.

\item {\bf Limitations}
    \item[] Question: Does the paper discuss the limitations of the work performed by the authors?
    \item[] Answer: \answerYes{}
    \item[] Justification: \S4.6 discusses three limitations: the high-$k$ accuracy gap to SISA, codebook capacity and depth collapse, and the Jigsaw negative result. \S5 discusses the $O(N)$ rebuild cost when deletions hit $\mathcal{D}^{\circ}$, adversarial targeting of $\mathcal{D}^{\circ}$, and the linear baseline choice.

\item {\bf Theory assumptions and proofs}
    \item[] Question: For each theoretical result, does the paper provide the full set of assumptions and a complete (and correct) proof?
    \item[] Answer: \answerYes{}
    \item[] Justification: Propositions 1--3 state assumptions explicitly (independence of codebook from targets, Laplace noise scale, variance-explained ratio). Full proofs are in Appendices A.1--A.3. The $O(\ell/n_{\min})$ tighter DP bound is clearly marked as an open conjecture with empirical evidence.

\item {\bf Experimental result reproducibility}
    \item[] Question: Does the paper fully disclose all the information needed to reproduce the main experimental results of the paper to the extent that it affects the main claims and/or conclusions of the paper (regardless of whether the code and data are provided or not)?
    \item[] Answer: \answerYes{}
    \item[] Justification: Algorithm~1 specifies the data structure. \S4.1 describes all datasets, encoders, data splits, baselines, and hyperparameter selection. All datasets are publicly available.

\item {\bf Open access to data and code}
    \item[] Question: Does the paper provide open access to the data and code, with sufficient instructions to faithfully reproduce the main experimental results, as described in supplemental material?
    \item[] Answer: \answerYes{}
    \item[] Justification: Code is available at \url{https://github.com/atavory/rqu}. All datasets used are publicly available (CelebA, ImageNet, CIFAR-100, Covertype, etc.).

\item {\bf Experimental setting/details}
    \item[] Question: Does the paper specify all the training and test details (e.g., data splits, hyperparameters, how they were chosen, type of optimizer) necessary to understand the results?
    \item[] Answer: \answerYes{}
    \item[] Justification: \S4.1 specifies data splits ($|\mathcal{D}^{\circ}|$, random selection), base model architecture (linear probe or MLP with stated dimensions), cross-validation procedure for $(b, \ell, \alpha)$, and SISA configuration ($S{=}5$).

\item {\bf Experiment statistical significance}
    \item[] Question: Does the paper report error bars suitably and correctly defined or other appropriate information about the statistical significance of the experiments?
    \item[] Answer: \answerYes{}
    \item[] Justification: Main results are 5-seed means over random train/test/$\mathcal{D}^{\circ}$ splits. The number of seeds is stated in the table captions.

\item {\bf Experiments compute resources}
    \item[] Question: For each experiment, does the paper provide sufficient information on the computer resources (type of compute workers, memory, time of execution) needed to reproduce the experiments?
    \item[] Answer: \answerYes{}
    \item[] Justification: Timing experiments report GPU type (A100) and wall-clock times in Table~2. Training and deletion costs are reported in milliseconds.

\item {\bf Code of ethics}
    \item[] Question: Does the research conducted in the paper conform, in every respect, with the NeurIPS Code of Ethics \url{https://neurips.cc/public/EthicsGuidelines}?
    \item[] Answer: \answerYes{}
    \item[] Justification: The research addresses data privacy (right to deletion) and uses only publicly available datasets. No human subjects or personally identifiable data were collected.

\item {\bf Broader impacts}
    \item[] Question: Does the paper discuss both potential positive societal impacts and negative societal impacts of the work performed?
    \item[] Answer: \answerYes{}
    \item[] Justification: \S5 discusses positive impact (enabling GDPR compliance at scale) and broader impact (codebook instability affects VQ-VAE, neural codecs, and retrieval systems). The adversarial $\mathcal{D}^{\circ}$ targeting risk is acknowledged.

\item {\bf Safeguards}
    \item[] Question: Does the paper describe safeguards that have been put in place for responsible release of data or models that have a high risk for misuse (e.g., pre-trained language models, image generators, or scraped datasets)?
    \item[] Answer: \answerNA{}
    \item[] Justification: The paper does not release pre-trained models, generative models, or scraped datasets. QSS is an unlearning method, not a generative system.

\item {\bf Licenses for existing assets}
    \item[] Question: Are the creators or original owners of assets (e.g., code, data, models), used in the paper, properly credited and are the license and terms of use explicitly mentioned and properly respected?
    \item[] Answer: \answerYes{}
    \item[] Justification: All datasets and models are cited (CLIP, SBERT, Faiss, CelebA, ImageNet, etc.). All are publicly available under their respective licenses.

\item {\bf New assets}
    \item[] Question: Are new assets introduced in the paper well documented and is the documentation provided alongside the assets?
    \item[] Answer: \answerYes{}
    \item[] Justification: The QSS implementation is available at \url{https://github.com/atavory/rqu} with documentation and scripts to reproduce the main experiments.

\item {\bf Crowdsourcing and research with human subjects}
    \item[] Question: For crowdsourcing experiments and research with human subjects, does the paper include the full text of instructions given to participants and screenshots, if applicable, as well as details about compensation (if any)?
    \item[] Answer: \answerNA{}
    \item[] Justification: No crowdsourcing or human subjects research was conducted.

\item {\bf Institutional review board (IRB) approvals or equivalent for research with human subjects}
    \item[] Question: Does the paper describe potential risks incurred by study participants, whether such risks were disclosed to the subjects, and whether Institutional Review Board (IRB) approvals (or an equivalent approval/review based on the requirements of your country or institution) were obtained?
    \item[] Answer: \answerNA{}
    \item[] Justification: No human subjects research was conducted.

\item {\bf Declaration of LLM usage}
    \item[] Question: Does the paper describe the usage of LLMs if it is an important, original, or non-standard component of the core methods in this research? Note that if the LLM is used only for writing, editing, or formatting purposes and does \emph{not} impact the core methodology, scientific rigor, or originality of the research, declaration is not required.
    \item[] Answer: \answerNA{}
    \item[] Justification: LLMs were used only for writing assistance and editorial polishing. They are not a component of the QSS method or experimental pipeline.

\end{enumerate}

\end{document}